\newtheorem{lemma}{Lemma}
\journal{Journal of Computational Physics}
\begin{document}

\begin{frontmatter}

\title{Theory and Application of Shapelets to the Analysis of Surface Self-assembly Imaging}
    
\author[cs]{Robert Suderman}

\author[cs]{Daniel Lizotte}
\ead{dlizotte@uwaterloo.ca}

\author[che,win]{Nasser Mohieddin Abukhdeir\corref{corr}}
\cortext[corr]{Corresponding author}
\ead{nmabukhdeir@uwaterloo.ca}

\address[cs]{Department of Computer Science, University of Waterloo, Waterloo, Ontario N2L 3G1, Canada}
\address[che]{Department of  Chemical Engineering, University of Waterloo, Waterloo, Ontario N2L 3G1, Canada}
\address[win]{Waterloo Institute for Nanotechnology, University of Waterloo, Waterloo, Ontario N2L 3G1, Canada}

\begin{abstract}
A method for quantitative analysis of local pattern strength and defects in surface self-assembly imaging is presented and applied to images of stripe and hexagonal ordered domains.
The presented method uses ``shapelet'' functions which were originally developed for quantitative analysis of images of galaxies ($\propto \SI{e20}{\meter}$).
In this work, they are used instead to quantify the presence of translational order in surface self-assembled films ($\propto \SI{e-9}{\meter}$) through reformulation into ``steerable'' filters.
The resulting method is both computationally efficient (with respect to the number of filter evaluations), robust to variation in pattern feature shape, and, unlike previous approaches, is applicable to a wide variety of pattern types.
An application of the method is presented which uses a nearest-neighbour analysis to distinguish between uniform (defect-free) and non-uniform (strained, defect-containing) regions within imaged self-assembled domains, both with striped and hexagonal patterns.
\end{abstract}

\begin{keyword}
surface self-assembly \sep pattern recognition \sep shapelets \sep  image processing \sep machine learning
\end{keyword}

\end{frontmatter}

\newcommand{\ie}{\textit{i.e.{}}}

\section{Introduction}

Modern microscopy techniques are producing an ever-increasing amount of high-resolution imaging data of self-assembled materials.  
There are thousands of images of such films in the self-assembly literature alone.
One grand challenge in this area is to relate the structure and dynamics of materials as captured by imaging data to desired physical and chemical properties. To date, researchers have predominantly relied on purely qualitative techniques (\textit{e.g.\ }visual inspection) or simple heuristic algorithms to interpret imaging data with this end goal in mind. However, such techniques i) cannot provide a quantitative description of the relationship between the imaging data and material properties and ii) do not scale to large amounts of data. Effectively using large amounts of imaging data to infer material properties requires {\em quantitative} characterization methods for the patterns in microscopy images (uniform regions, defects, \textit{etc.}) that characterize the physical structure of the surface. 

Recently developed methods for quantitative characterization, as shown in Figures \ref{fig:example_pattern}-\ref{fig:example_analysis}, have yielded key fundamental insights into universality of self-assembly dynamics \cite{Harrison2000, Harrison2004, Abukhdeir2008a,Abukhdeir2011a}.
The methods use bond-orientational order theory \cite{Strandburg1992}, and have been applied mainly to studies of block copolymer (BCP) self-assembly on surfaces \cite{Harrison2000,Yokojima2002, Harrison2004, Vega2005}.
They represent first steps toward solving the grand challenge.
Figure \ref{fig:example_analysis} shows an example of such a method: given an image with both a known pattern (hexagonal) and convex pattern features, the method finds orientational relationships among these pattern features.
These relationships can then be used in conjunction with bond-orientational order theory \cite{Strandburg1992} to approximate local pattern orientation and identify defects.
This type of quantification of surface order has been vital in the identification of pattern evolution mechanisms and defect kinetics \cite{Harrison2000,Yokojima2002,Harrison2004,Abukhdeir2008a,Abukhdeir2011a}.
Furthermore, relationships resulting from this type of analysis have since been shown to imply that pattern dynamics are universal, {\em i.e.}\ they are invariant with respect to chemical structure and physical interactions that drive pattern formation \cite{Abukhdeir2008a,Abukhdeir2011a}.
Besides furthering fundamental understanding of self-assembly, these methods will be key enablers of the ultimate goal of controlling self-assembly to produce task-optimized material properties \cite{McGill2014}.

\begin{figure}[h]
    \centering
    \begin{subfigure}{0.45\linewidth}
        \includegraphics[width=\linewidth]{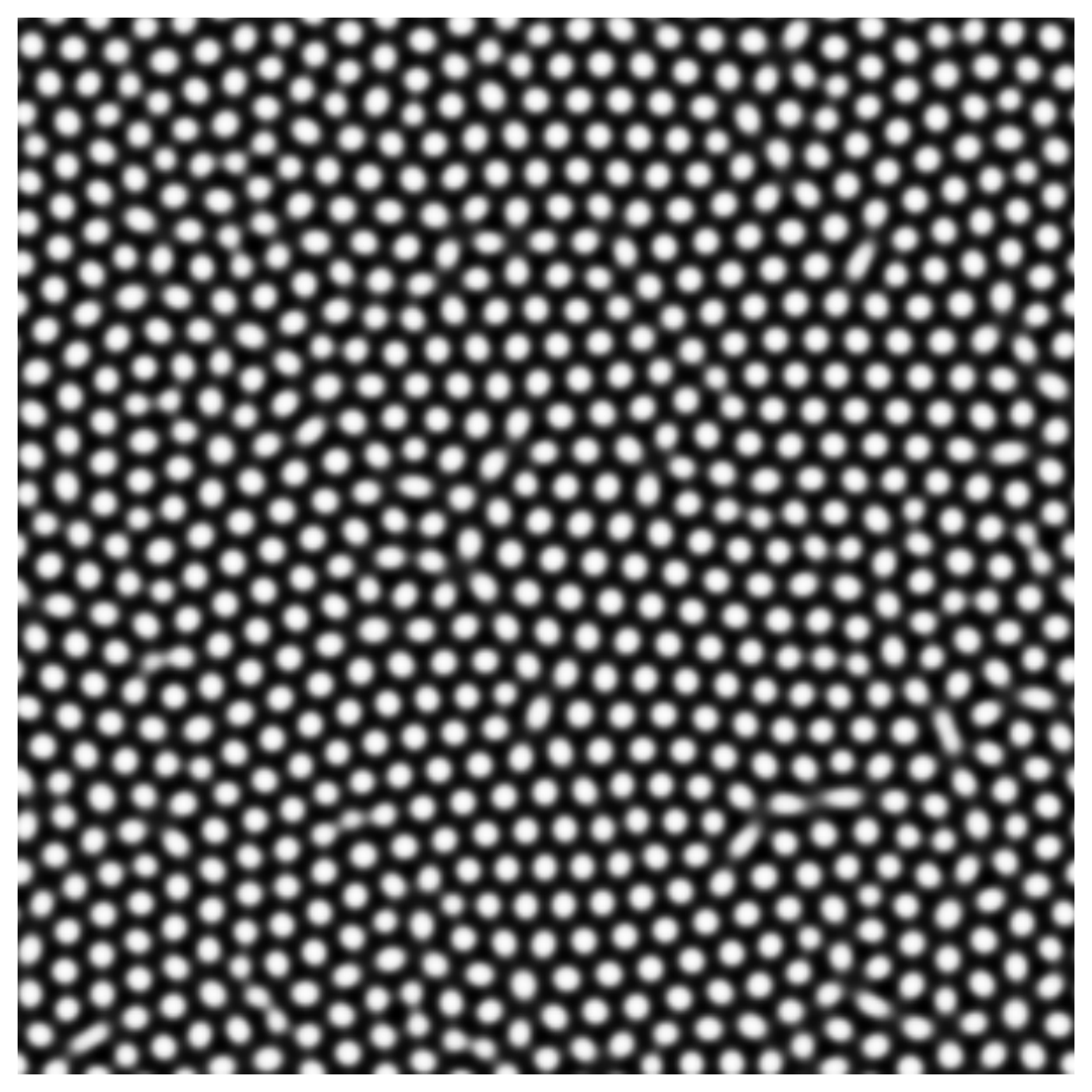}
        \subcaption{}\label{fig:example_pattern}
    \end{subfigure}
    \begin{subfigure}{0.45\linewidth}
        \frame{\includegraphics[width=\linewidth]{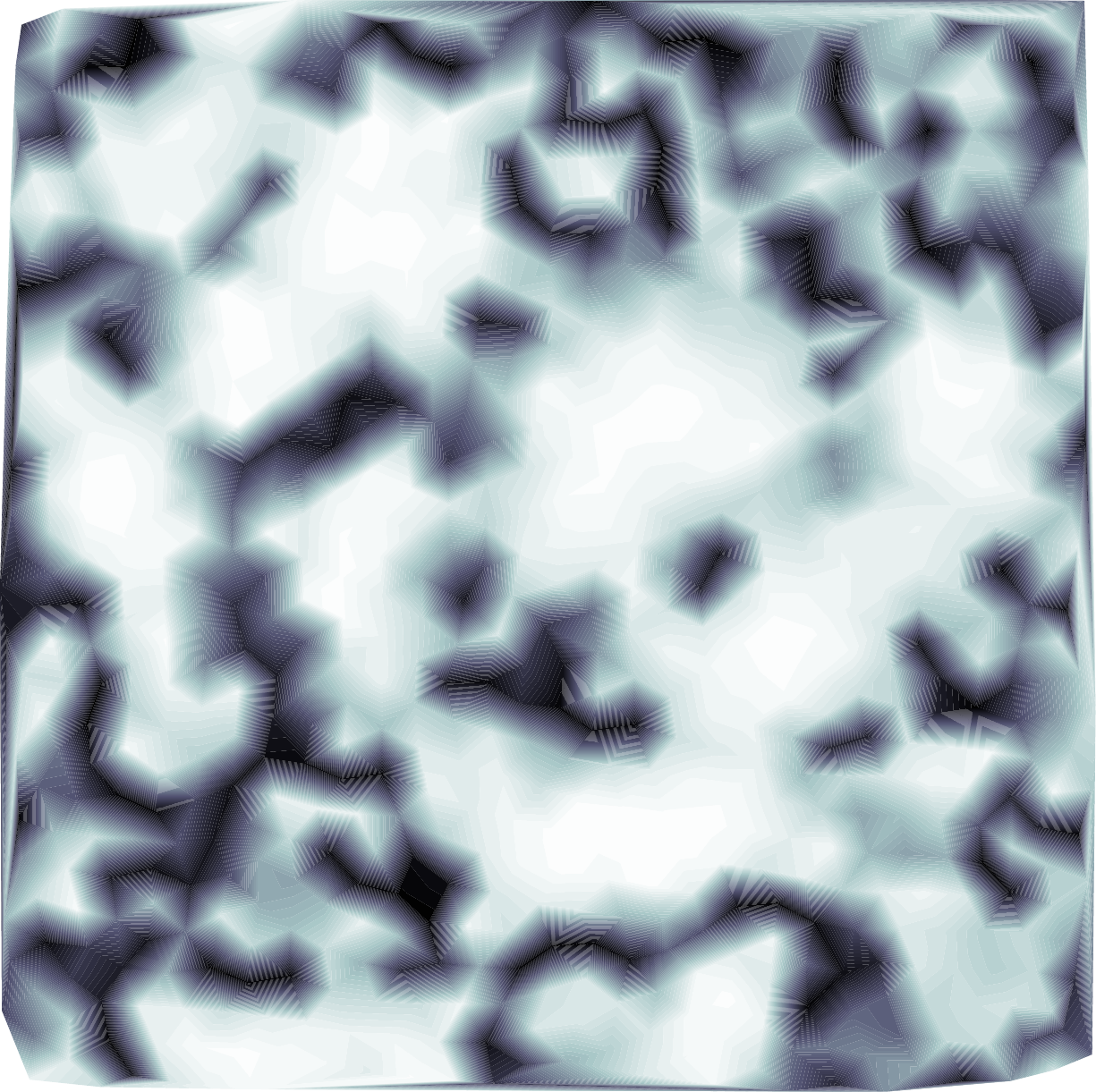}}
        \subcaption{}\label{fig:example_analysis}
    \end{subfigure}
    \caption{(a) Example of a (simulated) hexagonal self-assembled film from past work \cite{Abukhdeir2011} where the field shows surface coverage of a species ranging from $0\rightarrow 1$; (b) local hexagonal order resulting from applying the bond orientational order method to (a) which ranges from no order (black) to perfect order (white).}

\end{figure}

Despite its advantages, the bond-orientational order approach to self-assembly pattern analysis has several key limitations:
\begin{itemize}
    \item \textit{resolution} -- bond-orientational order theory quantifies order at the pattern ``feature'' level, where pattern features are sub-domains which repeat in an ordered way. Typically, these features are larger than the resolution of the image, as is the case in Figure \ref{fig:example_pattern}, which results in a coarse resolution of local pattern order as shown in Figure \ref{fig:example_analysis} where interpolation is used.
    \item \textit{convexity} -- in order to compute \textit{unique} nearest-neighbour ``bonds'' between pattern features, the features must be convex. This precludes the use of the method on stripe patterns and patterns in which features vary greatly in character (strained patterns).
    \item \textit{uncertainty} -- in order to compute nearest-neighbour ``bonds'' between pattern features, features must be uniquely identifiable. Typical experimental images of self-assembly phenomena involve nanoscale features which result in significant measurement uncertainty.
\end{itemize}
Furthermore, images frequently contain multiple regions that may or may not contain patterns, and that sometimes contain multiple patterns.
Thus there is a clear need for robust, automated approaches to pattern recognition (``Is a pattern present in this image? Where within the image?'') and classification (``What type of pattern is present?'') for self-assembly imaging \cite{Rehse2008}, in addition to a more detailed characterization (``How is the pattern oriented? Where are the defects?'') once these initial questions have been answered.

This work presents and demonstrates an analysis method for self-assembled surface imaging that is fundamentally different from past approaches for BCPs and that addresses the limitations described above.
The basis of the method is a family of localized functions called {\em shapelets} \cite{Refregier2003}.
Shapelets were originally developed to characterize images of galaxies ($\propto \SI{e20}{\meter}$) \cite{Refregier2003}; they are used here to characterize images of nanopatterned surfaces ($\propto \SI{e-9}{\meter}$).
It is demonstrated that, using simulation data of self-assembled surfaces, the presented approach is able to robustly determine \emph{local} pattern characteristics, using an appropriate subset of shapelets \cite{Refregier2003,Massey2005} and steerable filter theory \cite{Freeman1991}, such as sub-domains that are well-ordered, strained, and/or have defects present.

\section{Background}

The analysis method synthesizes global pattern information derived from the discrete Fourier transform with local pattern information derived from projecting the image onto shapelet filters in a rotation-invariant way using steerable filter theory. Fourier analysis, shapelets, and steerable filter theory are reviewed below.

\subsection{Fourier Analysis}
\newcommand{\diff}{\mathrm{d}}

The discrete Fourier transform (DFT) is a standard image analysis approach that can be used to quantify the presence of patterns or periodicity in an image. Given an image with intensity given by $f(x,y)$, the DFT of $f$ is  
\newcommand{\ee}{\ensuremath{\mathrm{e}}}
\newcommand{\I}{\ensuremath{\mathrm{\imath}}}
\begin{equation}
\mathcal{F}\{f\}(u,v) = \sum_{x = 0}^{X-1} \sum_{y = 0}^{Y - 1} f(x,y)~\exp{-\I\left( \frac{2\pi ux}{X} + \frac{2\pi vy}{Y}\right)} \label{eq:dft}
\end{equation}
which transforms the image from the spatial domain to the frequency domain.
The resulting coefficients $\mathcal{F}\{f\}(u,v)$ of the Fourier modes over a discrete set of wave vectors indexed by $u$ and $v$ characterize both wavelength and orientation \cite{Szeliski2011} of all periodic image patterns.
While the DFT can be computed efficiently, the coefficients provide only {\it global} information in the sense that the Fourier modes span the whole spatial domain, \ie, they are not localized in space.
Thus, this decomposition can only determine the presence of domain-wide periodic components in the image and recover their characteristic wavelengths.

Figure \ref{fig:dft} shows the resulting frequency domain representation of the image from Figure \ref{fig:example_pattern}, and Figure \ref{fig:dftavg} shows the radially averaged spectral density.
For images of simple uniformly-ordered domains (e.g.\ those with a single orientation) the DFT provides sufficient information about the type of pattern and its orientation to fully characterize the pattern.
However, this simplicity is rarely observed in experimental imaging of self-assembled surfaces.
In domains that are not well-ordered, the output of the DFT reveals only the presence of periodic structure within the image and characteristic length scales of that structure; it does not reveal {\em local} pattern structure.
For example, the peaks in the radially averaged spectral density visible in Figure \ref{fig:dftavg} reveal length scales of periodic patterns in Figure \ref{fig:example_pattern}, but neither they nor the full DFT in Figure \ref{fig:dft} reveal the location of defects or grain boundaries.

\begin{figure}[h]
    \centering
    \begin{subfigure}{0.3\textwidth}
        \includegraphics[width=\textwidth]{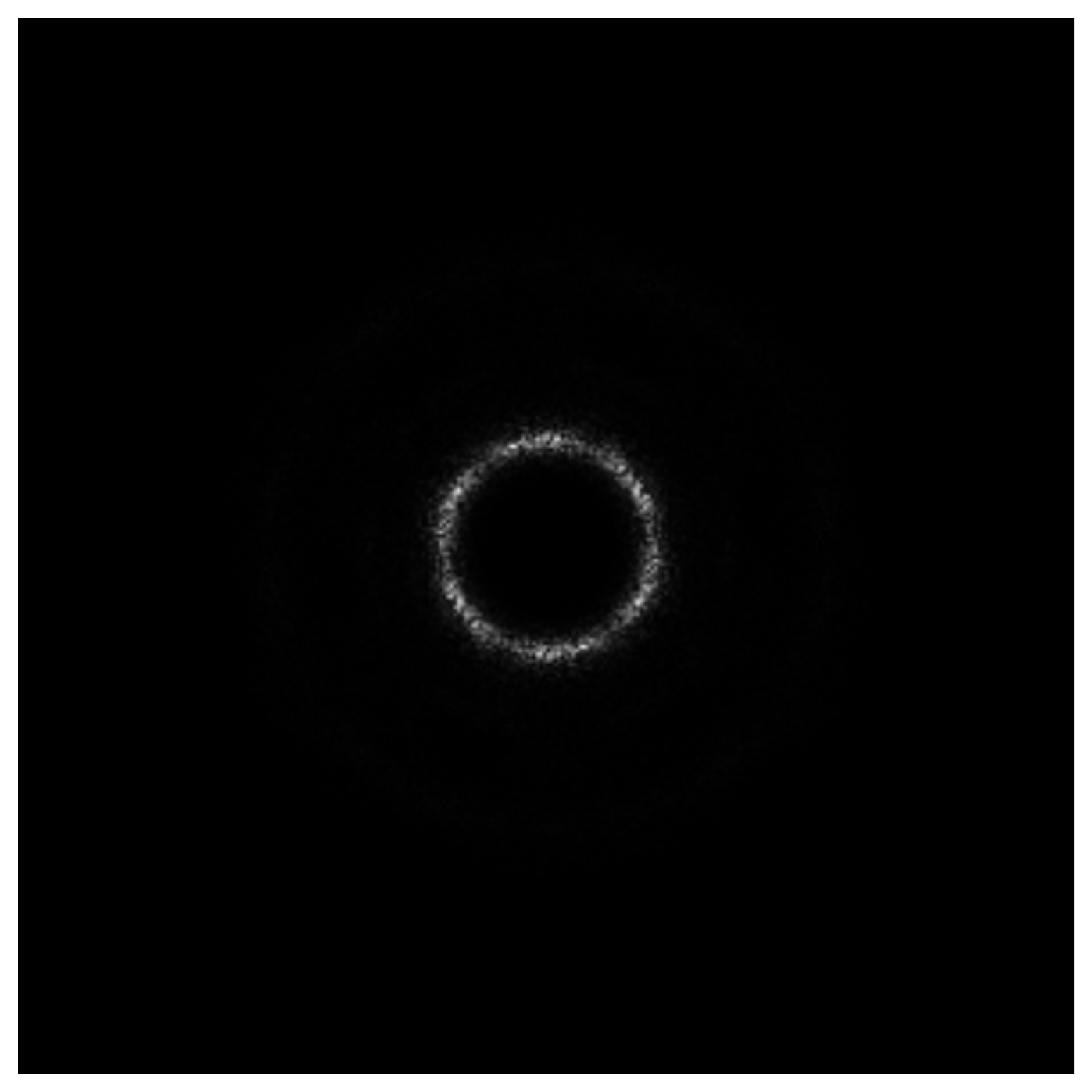}
        \subcaption{}\label{fig:dft}
    \end{subfigure}
    \begin{subfigure}{0.45\textwidth}
        \includegraphics[width=\textwidth]{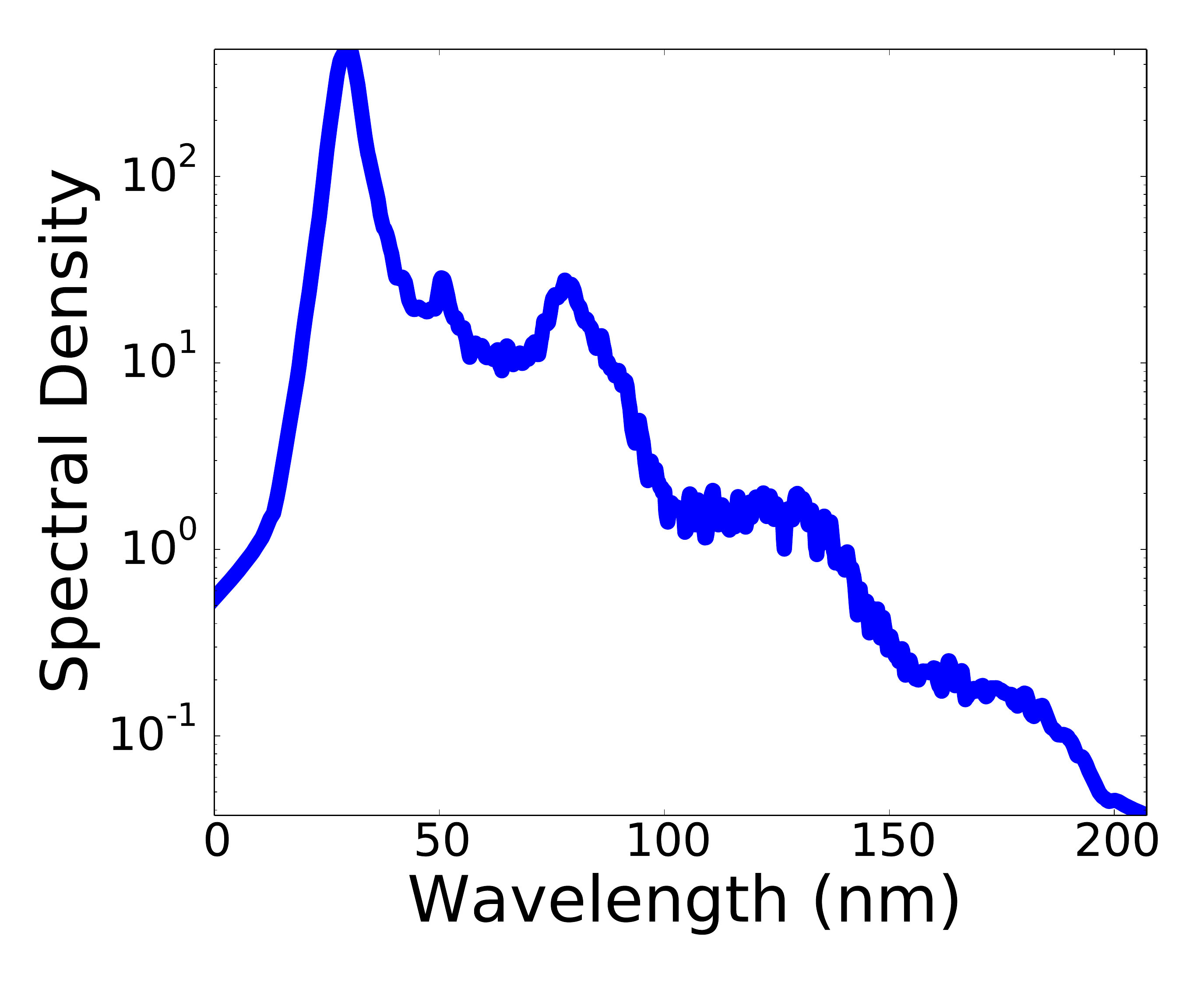}
        \subcaption{}\label{fig:dftavg}
    \end{subfigure}

    \caption{(a) Two-dimensional spectral density plot from the DFT of Figure \ref{fig:example_pattern} with the origin at the centre of the image; (b) radially averaged spectral density from (a).}

\end{figure}

\subsection{Shapelets}
Shapelets are a recently-proposed family of orthonormal basis functions which have been used for image analysis and are particularly suited to characterizing local pattern features.
Shapelet analysis projects the image of interest onto basis functions of \emph{fixed scale} and \emph{varying shape}, shown in Figure \ref{fig:shapelets}.
As with the windowed Fourier transform and wavelet decompositions \cite{book}, shapelet analysis \cite{Refregier2003} involves a linear decomposition of an image obtained by projecting it onto a set of localized orthonormal basis functions (Figure \ref{fig:shapelets}).
The {\it polar shapelets} \cite{Massey2005} are of particular interest as they possess rotational symmetries that are also present in images of self-assembled materials. They are given by,
\begin{equation}\label{eqn:2D_polar_shapelets}
B^{\circlearrowleft}_{n,m}(r,\theta;\beta) = \beta^{-1}~\chi_{n,m}\left(\beta^{-1}r\right) \ee^{-\I m \theta}
\end{equation}
where $n$ and $m$ are nonnegative integers, and $\beta$ is a characteristic length scale. The function $\chi_{n,m}$ is given by
\begin{equation}\label{eqn:1D_polar_shapelets}
\chi_{n,m}(r) = c_{n,m} r^m L^{m}_{n}(r^2)\ee^{\frac{-r^{2}}{2}}
\end{equation}
where $c_{n,m}$ are constants, and $L^{m}_{n}(r) = \frac{r^{-m} \ee^{r}}{n!} \frac{\diff^n}{\diff r^n} (r^{m+n} \ee^{-r})$ are the associated Laguerre polynomials.\footnote{Massey et al.\ use alternative indices $m',n'$ where $m' = m$ and $n' = 2n + m$. This requires additional constraints on $m',n'$.}
Examples for $n = 0$ and $m$ from $0$ through $6$ are shown in Figure \ref{fig:shapelets}. Note that for $m > 0$ a polar shapelet has a real part and an imaginary part that is equal to a rotation of the real part by an angle of $-\pi/(2m)$.
It is natural to define polar shapelets in polar coordinates; however, for convenience, define $B_{n,m}(x,y;\beta) \triangleq B^\circlearrowleft_{n,m}(x^2+y^2,\tan^{-1}(y,x);\beta)$ since images are expressed in Cartesian coordinates.

Standard shapelet analysis of an image is similar to that of other discrete transforms such as the Fourier and wavelet transforms. The image is decomposed into a linear combination of the basis functions,
\begin{equation}\label{eqn:linear_comb}
f(x,y) = \sum_{n=0}^N\sum_{m=0}^M w_{n,m}B_{n,m}(x,y;\beta)
\end{equation}
where each weighting coefficient $w_{n,m}$ is given by a discrete inner product or {\it correlation} of its corresponding shapelet $B_{n,m}(x,y;\beta)$ with the image function $f(x,y)$,
\begin{equation}\label{eqn:inner_product} 
w_{n,m} = f \star B_{n,m} \triangleq \sum_{x'}\sum_{y'} f(x',y')B_{n,m}(x',y';\beta).
\end{equation} 
The real (resp.\ imaginary) part of coefficient $w_{n,m}$ can be interpreted as a measure of similarity between the image and the real (resp.\ imaginary) part of $B_{n,m}$. The coefficient $w_{n,m}$ is termed the {\it response} for shapelet $B_{n,m}$. Because the shapelets are spatially localized functions, different translated versions of a shapelet have different responses. It is therefore natural to consider response as a function of image coordinates $x$ and $y$
\begin{equation}\label{eqn:inner_product} 
w_{n,m}(x,y) = \sum_{x'}\sum_{y'} f(x',y')B_{n,m}(x'-x,y'-y;\beta)
\end{equation} 
which gives the similarity between the image and a shapelet whose origin has been translated to image location $(x,y)$.

\begin{figure}[h]
\centering
  \includegraphics[width=0.11\linewidth]{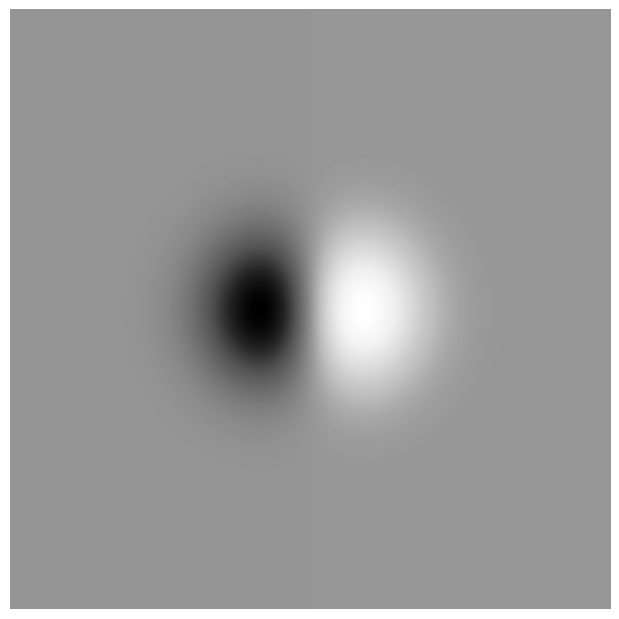}
  \includegraphics[width=0.11\linewidth]{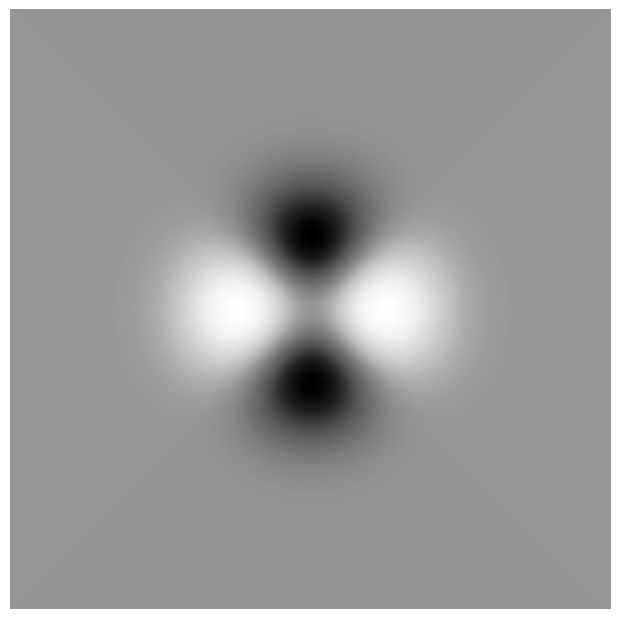}
  \includegraphics[width=0.11\linewidth]{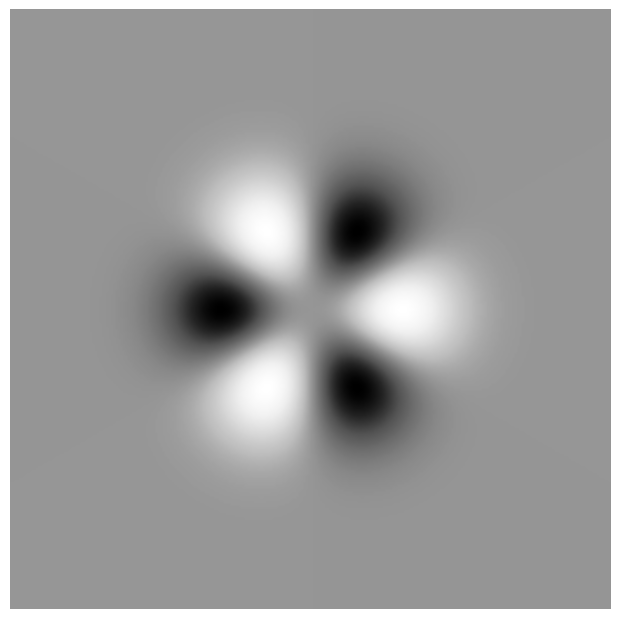}
  \includegraphics[width=0.11\linewidth]{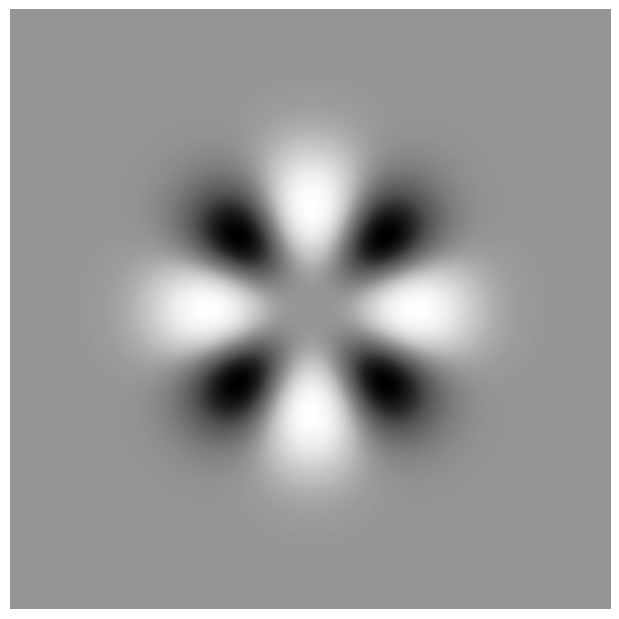}
  \includegraphics[width=0.11\linewidth]{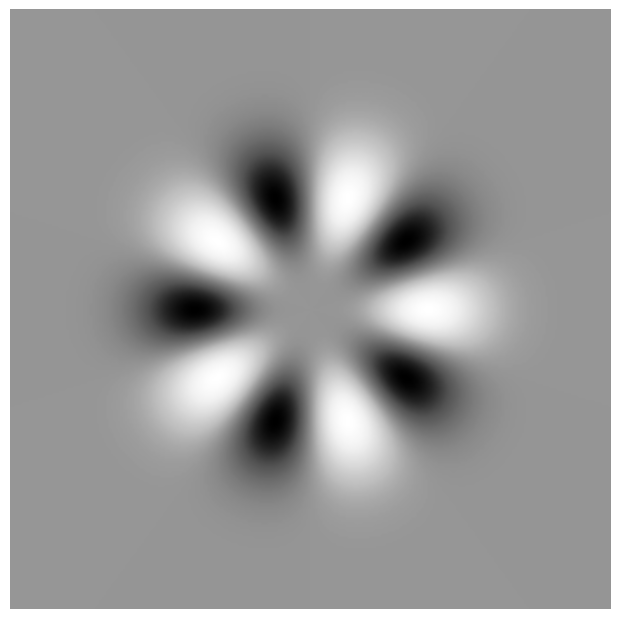}
  \includegraphics[width=0.11\linewidth]{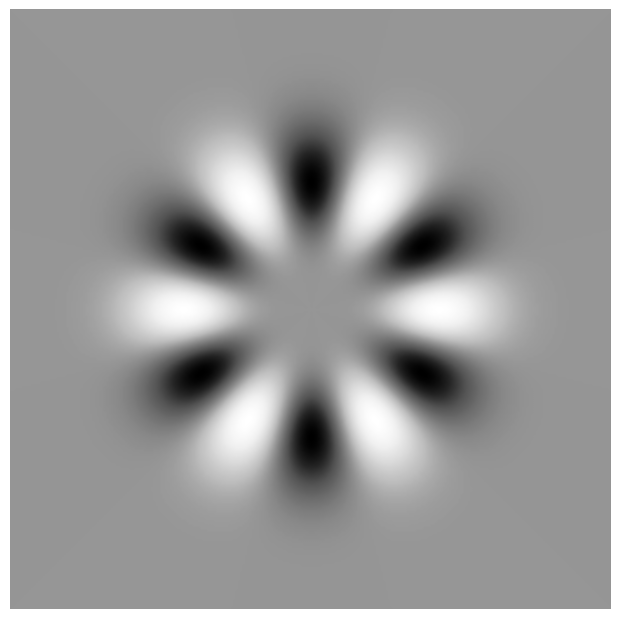}
  \includegraphics[width=0.11\linewidth]{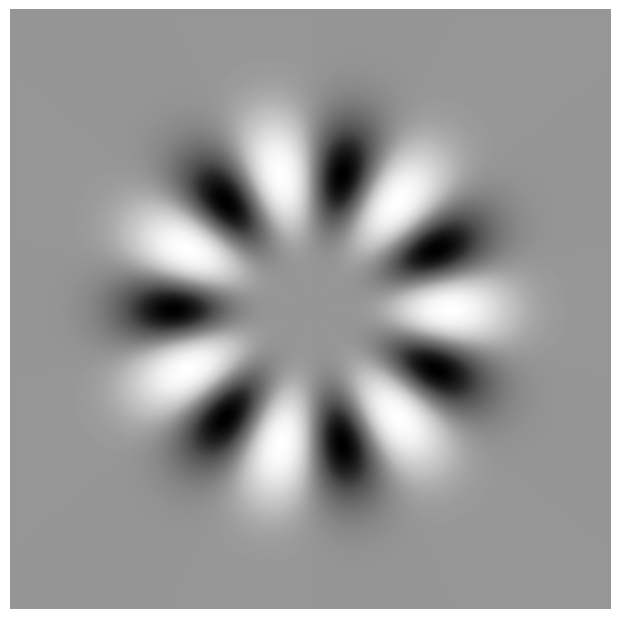}
  \includegraphics[width=0.11\linewidth]{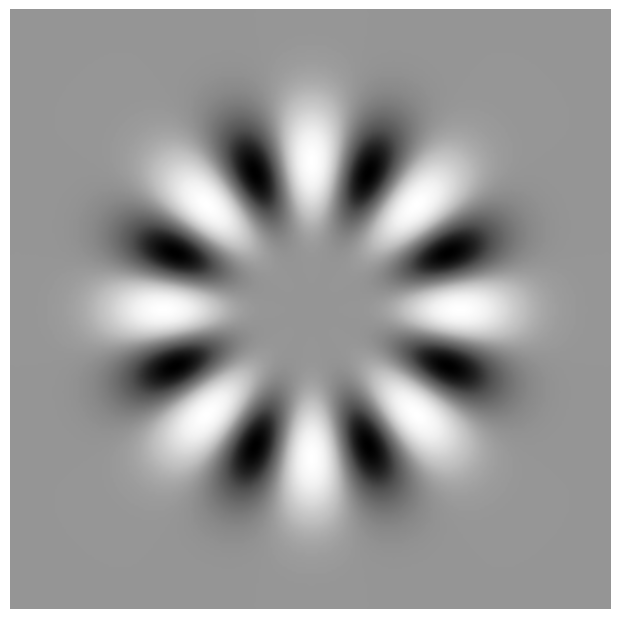}\\
  \includegraphics[width=0.11\linewidth]{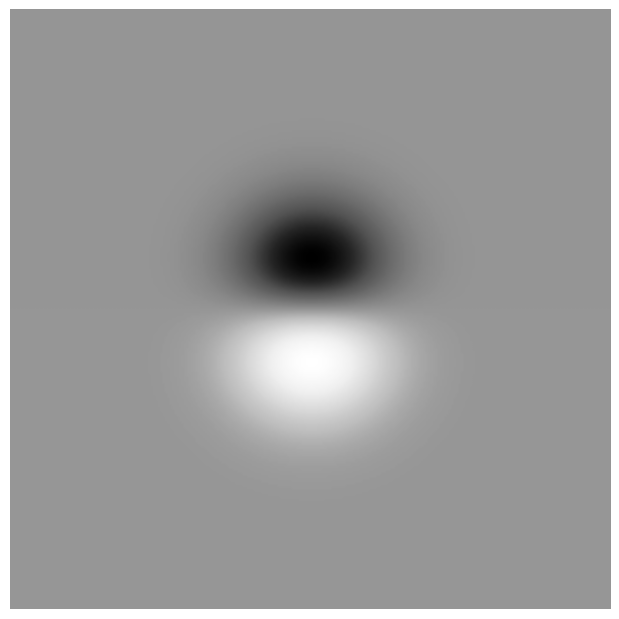}
  \includegraphics[width=0.11\linewidth]{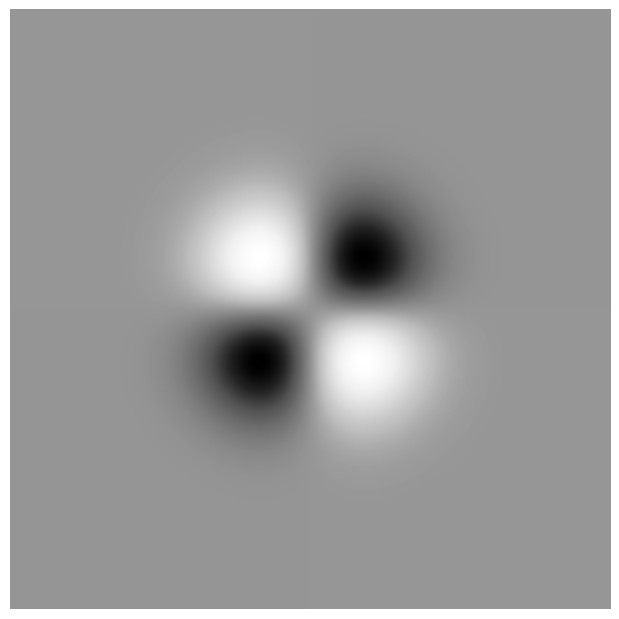}
  \includegraphics[width=0.11\linewidth]{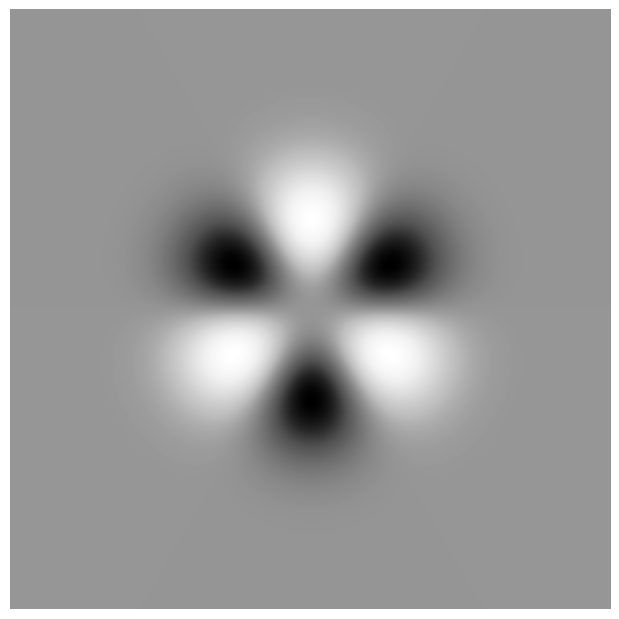}
  \includegraphics[width=0.11\linewidth]{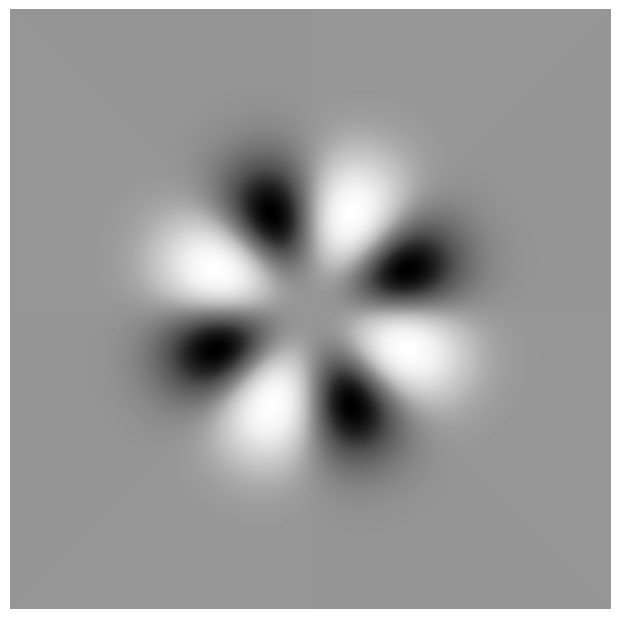}
  \includegraphics[width=0.11\linewidth]{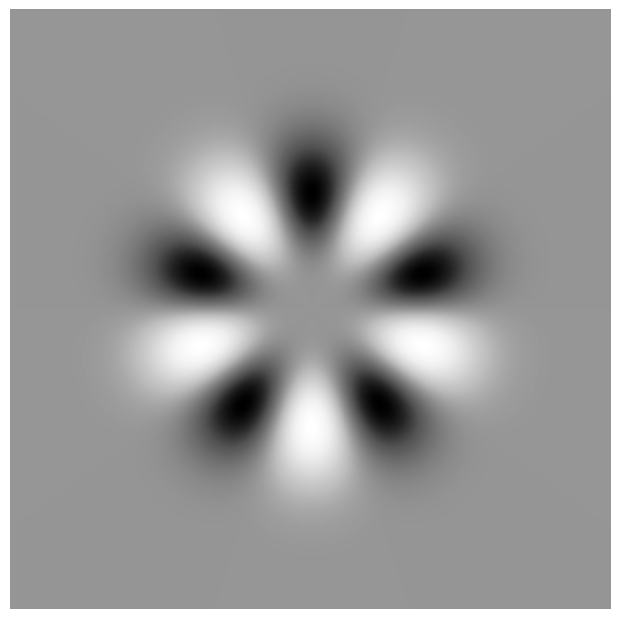}
  \includegraphics[width=0.11\linewidth]{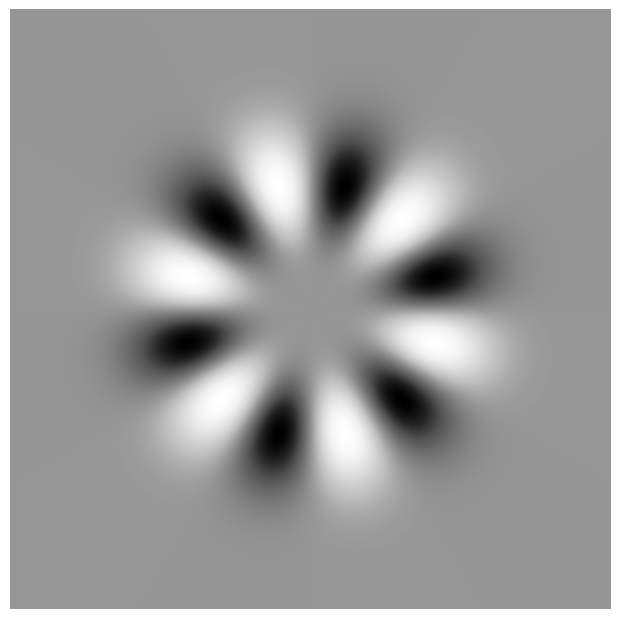}
  \includegraphics[width=0.11\linewidth]{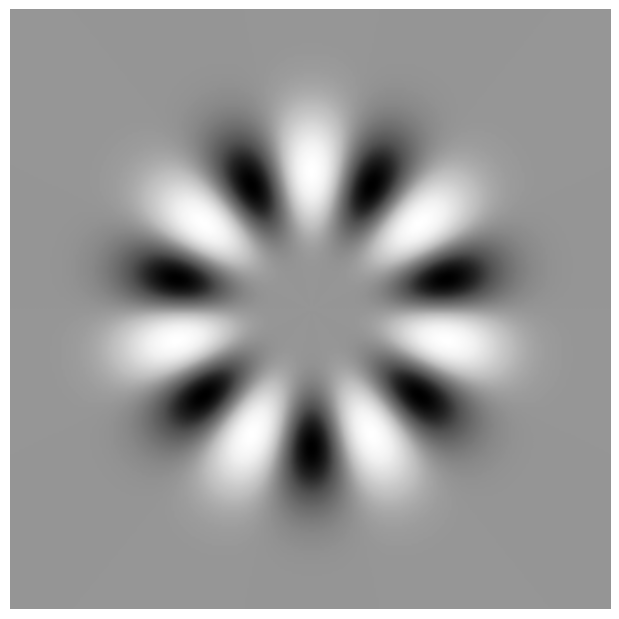}
  \includegraphics[width=0.11\linewidth]{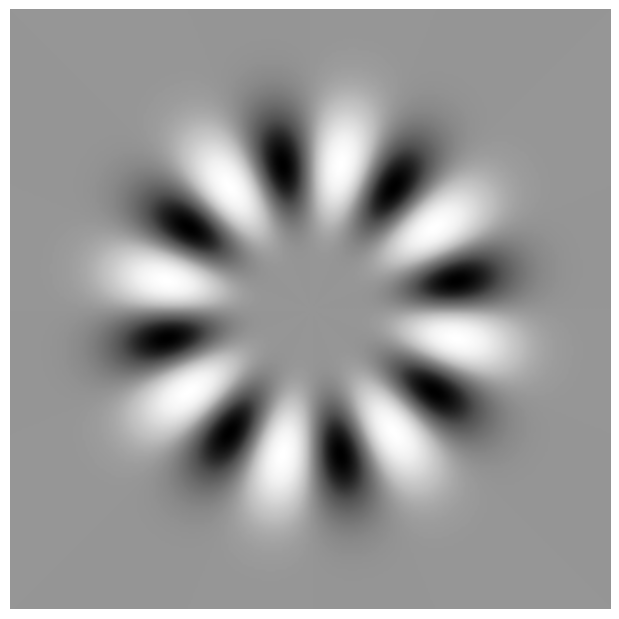}

\caption{Surface plots of the real (top row) and imaginary (bottom row) components of polar shapelet functions (eqn. \ref{eqn:2D_polar_shapelets}) with $n=0$ and $m=1\rightarrow 8$. Image length-scale is normalized to $1$ and $\beta=0.55$.}
  \label{fig:shapelets}
\end{figure}

\subsection{Steerable Filter Theory}\label{ss:steerable}
\newcommand{\rotang}{\varphi}

Given a (typically real-valued) filter $F(x,y)$ expressed in Cartesian coordinates, it is often useful to know how that filter would respond if it (or the image) were rotated.
Consider a version of the filter that has been rotated clockwise by a phase angle $\rotang$, $F(x,y;\rotang) = F(x \cos \rotang - y \sin \rotang,x\sin \rotang + y \cos \rotang)$.
It is particularly useful to know the $\rotang$ for which the filter response is maximal, as this angle contains information about pattern orientation: it gives the angle by which the pattern (or the filter) must be rotated in order to achieve maximum similarity between the pattern and the filter. 
Freeman \cite{Freeman1991} shows if a filter is {\it steerable}, any rotated version of the filter can be expressed as a linear combination of a finite (and typically small) set of filters.
This allows exact computation of the $\rotang$ for which the filter response is maximal much more efficiently than brute-force techniques, which must explicitly compute a large number of filter orientations \cite{Wang2013}.

\section{Results and Discussion}

The successful shapelet-based analysis of self-assembly image data requires methods for determining an appropriate subset of shapelets, the optimal orientation of the shapelets, and the appropriate scales for the selected shapelets.
Section \ref{sec:shapeletselection} develops methods for each of these tasks, and Section ~\ref{sec:application} demonstrates the use of the developed methods on self-assembly data.

\subsection{Sets of Steerable, Scale-Optimized Shapelets}\label{sec:shapeletselection}

\newcommand{\T}{\ensuremath{\mathsf{T}}}
\newcommand{\ptnconst}{a}

In order to use shapelets for pattern analysis, appropriate shapelet sets and optimal scales are first determined based on ``prototypical'' uniform patterns that approximate real surface self-assembly imaging data, but that have a convenient parametric form. The uniform patterns are expressed in terms of a two-dimensional Fourier series \cite{Gunaratne1994},
\begin{equation}\label{eq:uniformpattern}
  \rho(\bm{x}) = \sum_{n=0}^{N} \ptnconst_{n}\exp{\left(i\bm{k}_{n}\cdot\bm{x}\right)}
\end{equation}
where the constants $\ptnconst_{n}$ are related to the magnitude of the pattern modulation and $\bm{k}_{n}$ are the basis vectors of the pattern.
For one-mode approximations of stripe and hexagonal patterns of interest, the basis vectors are \cite{Gunaratne1994},
\begin{align}\label{eqn:uniform_coeffs}
    \bm{k}_{1} & = \frac{2\pi}{\lambda}\;\bm{e}_{2}\nonumber\\
    \bm{k}_{2} & = \frac{2\pi}{\lambda}\left(\frac{\sqrt{3}}{2}\bm{e}_{1} - \frac{1}{2}\bm{e}_{2} \right)\\
    \bm{k}_{3} & = \frac{2\pi}{\lambda}\left(\frac{-\sqrt{3}}{2}\bm{e}_{1} - \frac{1}{2}\bm{e}_{2} \right).\nonumber
\end{align}
where $\bm{e}_{1} = (1,0)^\T$ and $\bm{e}_{2} = (0,1)^\T$. For a stripe pattern, $\ptnconst_{1} \ne 0, \ptnconst_{2} = \ptnconst_{3} = 0$, and for a hexagonal pattern $\ptnconst_{1} = \ptnconst_{2} = \ptnconst_{3} \ne 0$.
The quantity $2\pi/\lambda$ is the wavenumber for the pattern length scale (wavelength) $\lambda$, which corresponds to the peak shown in Figure~\ref{fig:dft}.

In order to select a minimal set of shapelets that respond strongly when applied to uniform stripe and hexagonal patterns, or in general any surface pattern, the sub-set of shapelets should have the following properties:
\begin{enumerate}
    \item It should contain shapelets with the same fundamental rotational symmetries as the pattern of interest. Stripe patterns have subregions with 1- and 2-fold symmetry; hexagonal patterns have subregions with 1-, 2- , 3-, and 6-fold symmetry.
    \item The response magnitude of the shapelets should be \emph{invariant} with respect to rotations of the pattern. 
    \item The shapelets should respond most strongly to the dominant pattern frequencies.    
\end{enumerate}

\subsubsection{Pattern Symmetries} Referring to Figure \ref{fig:shapelets}, a convenient property of shapelets of order $(m,n)$ with $n=0$ and $m > 0$ is that they have $s$-fold rotational symmetries for $s$ corresponding to all numbers that divide $m$, plus the trivial 1-fold symmetry.
Thus, at minimum, shapelets up to and including order $m=2$ are necessary for analysis of stripe patterns, and shapelets up to order $m=6$ are necessary for hexagonal patterns.
For simplicity, shapelets up to order $m=6$ are used in all example analyses, which form an overcomplete set of shapelets for stripe patterns and a minimal set for hexagonal patterns. The ``redundant'' information that this set provides in the striped pattern case did not prove to be problematic.

\subsubsection{Rotational Invariance} In order to produce an analysis that is invariant to pattern rotations, for each shapelet in the set, the rotation of the shapelet that produces the largest response is determined.
This is illustrated in Figure \ref{fig:steerable_example}.
The brute-force, and thus computationally inefficient, approximate approach to finding the optimal rotation would be to determine the shapelet response for a large number of rotations of the shapelet and then select the rotation with the maximal response.
This has two drawbacks: (i) the large number of shapelet responses must be evaluated at each pixel and (ii) the solution is not exact.

\begin{figure}[h]
    \centering
    \begin{subfigure}{0.3\linewidth} 
        \includegraphics[width=\linewidth]{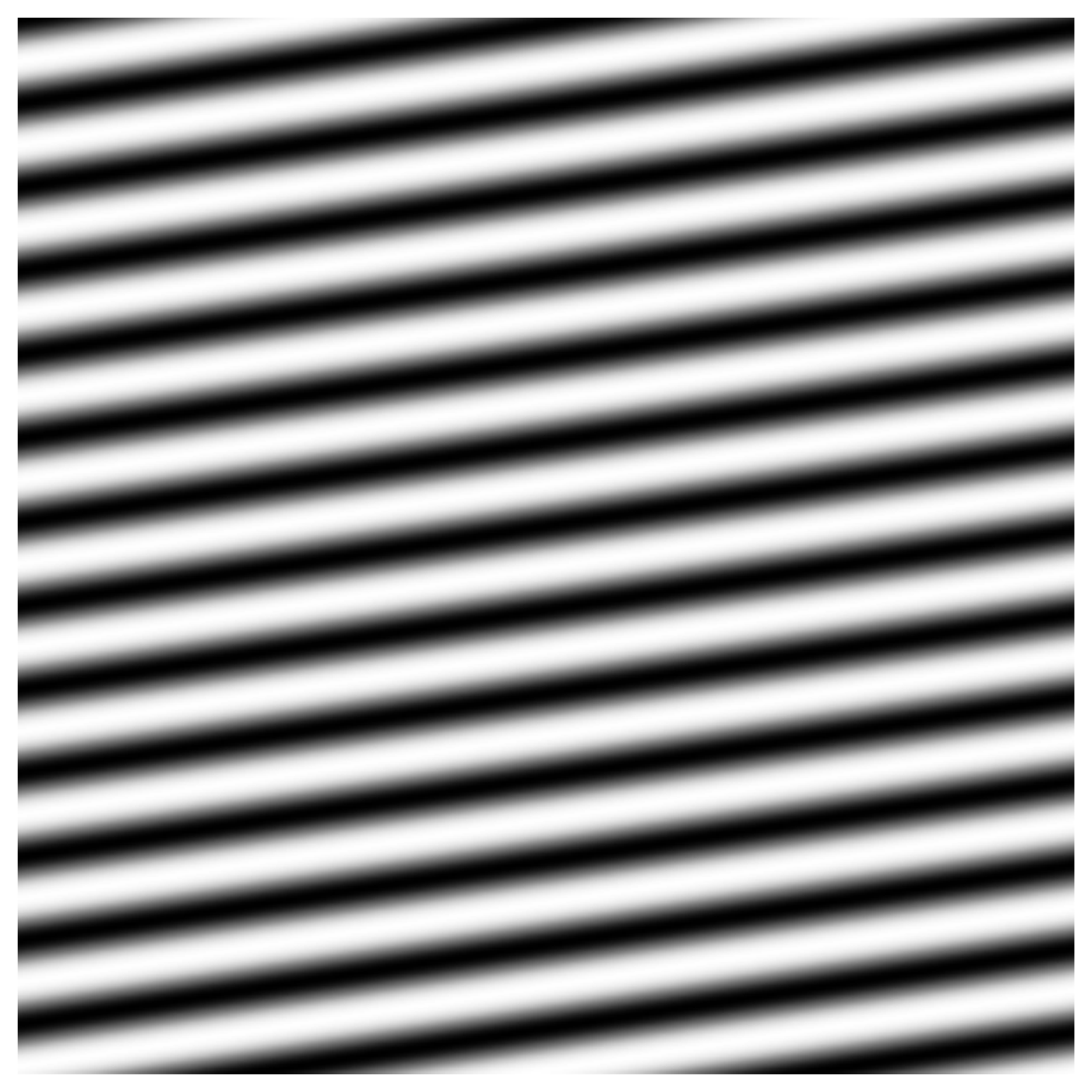}
        \subcaption{}\label{fig:perfect_stripe}
    \end{subfigure}
    \begin{subfigure}{0.3\linewidth} 
        \includegraphics[width=\linewidth]{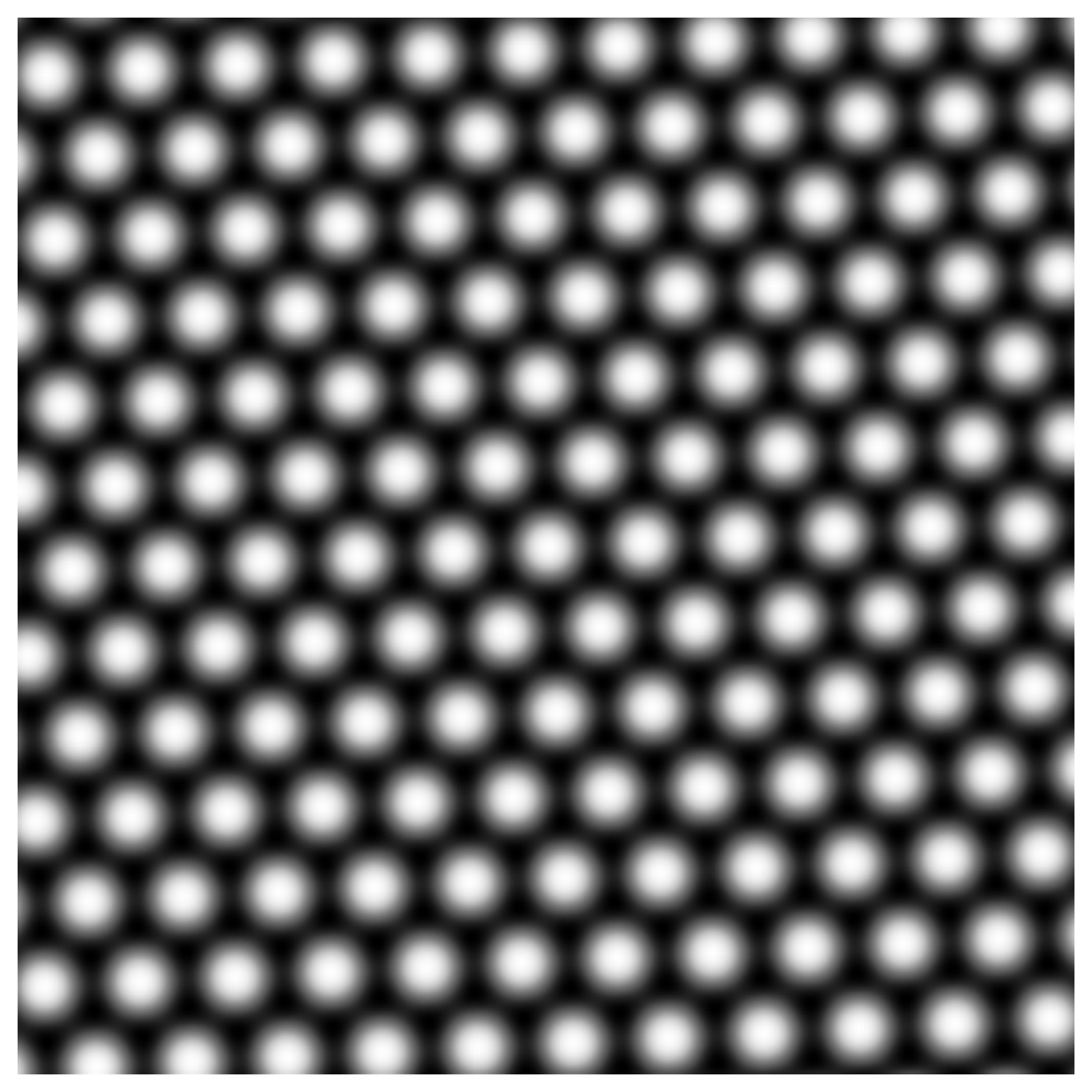}
        \subcaption{}\label{fig:perfect_hexagonal}
    \end{subfigure}
    \begin{subfigure}{0.3\linewidth} 
        \includegraphics[width=\linewidth,clip,trim=175px 25px 0px 150px]{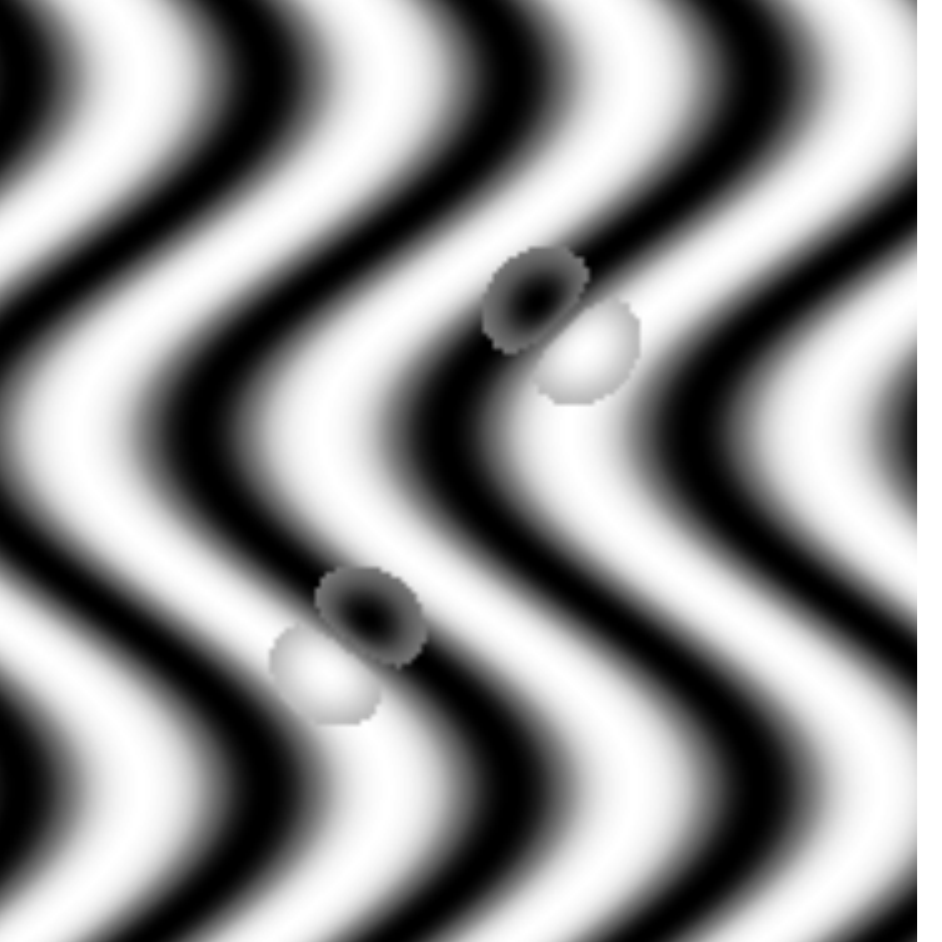}
        \subcaption{}\label{fig:steerable_example}
    \end{subfigure}
  
\caption{Plots of one-mode approximations of a (a) stripe and (b) hexagonal pattern using eqns. \ref{eq:uniformpattern}-\ref{eqn:uniform_coeffs}; (c) Schematic of rotations of a shapelet applied to a nonuniform stripe pattern.}
  \label{fig:steerable}
\end{figure}
Rather than solve for the optimal rotation approximately, an exact solution is derived from the fact that the shapelets are steerable.
\begin{lemma}\label{thm:steerable_shapelet} Let $B_{n,m}(x,y;\beta,\varphi) = B_{n,m}(x \cos \rotang - y \sin \rotang,x\sin \rotang + y \cos \rotang;\beta)$ be a shapelet as defined in {\rm (\ref{eqn:2D_polar_shapelets})} that has been rotated clockwise through a phase angle $\varphi$ as described in Section~\ref{ss:steerable}. Then 
\begin{equation}\label{eqn:steerable_shapelet}
\Re[B_{n,m}(x,y;\beta, \rotang)] = \cos{(m\rotang)}~\Re[B_{n,m}(x,y;\beta)] + \sin{(m\rotang)}~\Im[B_{n,m}(x,y;\beta)].
\end{equation}
\end{lemma}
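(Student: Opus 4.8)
The plan is to reduce the Cartesian rotation appearing in the statement to a simple angular shift in polar coordinates, after which the identity falls out of the angle-addition formula for cosine. First I would rewrite the rotation in polar form: writing the point $(x,y)$ as $x = r\cos\theta$, $y = r\sin\theta$ with radial coordinate $r$, I would substitute into the rotated coordinates $x\cos\varphi - y\sin\varphi$ and $x\sin\varphi + y\cos\varphi$ and simplify using the standard trigonometric addition identities. The result is that the rotated coordinates equal $\big(r\cos(\theta+\varphi),\, r\sin(\theta+\varphi)\big)$: the radius $r$ is unchanged and the polar angle is shifted from $\theta$ to $\theta+\varphi$. In other words, the Cartesian rotation by $\varphi$ is exactly the polar substitution $\theta \mapsto \theta+\varphi$.

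Next I would insert this substitution into the polar definition (\ref{eqn:2D_polar_shapelets}). Setting $A(r) \triangleq \beta^{-1}\chi_{n,m}(\beta^{-1}r)$ and noting that $A(r)$ is real-valued (since $\beta$, the constants $c_{n,m}$, and the associated Laguerre polynomials are all real), the rotated shapelet becomes $A(r)\,\ee^{-\I m(\theta+\varphi)}$. The unrotated shapelet is $A(r)\,\ee^{-\I m\theta}$, so the reality of $A(r)$ gives $\Re[B_{n,m}(x,y;\beta)] = A(r)\cos(m\theta)$ and $\Im[B_{n,m}(x,y;\beta)] = -A(r)\sin(m\theta)$, while $\Re[B_{n,m}(x,y;\beta,\varphi)] = A(r)\cos\!\big(m(\theta+\varphi)\big)$.

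The final step is purely algebraic: expanding $\cos\!\big(m\theta + m\varphi\big) = \cos(m\theta)\cos(m\varphi) - \sin(m\theta)\sin(m\varphi)$ and regrouping yields $A(r)\cos\!\big(m(\theta+\varphi)\big) = \cos(m\varphi)\,[A(r)\cos(m\theta)] + \sin(m\varphi)\,[-A(r)\sin(m\theta)]$, which is precisely $\cos(m\varphi)\,\Re[B_{n,m}(x,y;\beta)] + \sin(m\varphi)\,\Im[B_{n,m}(x,y;\beta)]$, as claimed in (\ref{eqn:steerable_shapelet}).

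I expect no deep obstacle: the essential content is that the angular dependence of a polar shapelet is the single Fourier mode $\ee^{-\I m\theta}$, so a rotation acts on it as multiplication by the pure phase $\ee^{-\I m\varphi}$. The only points demanding care are bookkeeping ones. I would verify that the stated clockwise-rotation convention genuinely produces the shift $\theta\mapsto\theta+\varphi$ rather than $\theta\mapsto\theta-\varphi$, since this fixes the sign of the $\sin(m\varphi)$ term and so determines that the right-hand side of (\ref{eqn:steerable_shapelet}) carries a $+$ rather than a $-$; and I would confirm that the radial factor $A(r)$ is free of any imaginary part, so that the real/imaginary split of the single complex mode behaves exactly as used above.
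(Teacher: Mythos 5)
Your proof is correct and takes essentially the same route as the paper's own (sketched) proof: both reduce the Cartesian rotation to the polar substitution $\theta \mapsto \theta + \varphi$, observe that the shapelet's angular dependence is the single mode $\mathrm{e}^{-\mathrm{i}m\theta}$ times a real radial factor, and obtain the identity from the cosine angle-addition formula. Your write-up merely makes explicit the details the paper leaves implicit, namely the realness of the radial factor and the sign check on the rotation convention.
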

\begin{proof}
(Sketch) Note from the polar form of shapelets that $B^{\circlearrowleft}_{n,m}(r,\theta;\beta)$ is of the form $h(r) \ee^{-\I m \theta}$, and that a version rotated by an angle $\varphi$ is of the form $h(r) \ee^{-\I m (\theta + \varphi)}$. The lemma follows from applying standard trigonometric identities to the complex exponential part of $B^{\circlearrowleft}$ and simplifying. 
\end{proof}
Steerable forms of each shapelet were formulated using Lemma~\ref{thm:steerable_shapelet} (eqn.\ \ref{eqn:steerable_shapelet}),
which are shown for $\rotang=0$ in Figure \ref{fig:shapelets} (highlighting of background figure). The steerable forms were then used to determine the optimal rotation angle for each shapelet at each image location.
\begin{lemma}
Let $w_{0,i} = f \star B_{0,i}(\cdot,\cdot;\beta)$, and define $\rotang^*_{0,i} = \arg\max_\varphi \Re[f \star B_{0,i}(\cdot,\cdot;\beta,\varphi)]$ and $w^*_{0,i} = \Re[f \star B_{0,i}(\cdot,\cdot;\beta,\rotang^*_{0,i})]$. Then 
\begin{equation}\label{eqn:steerable_orientation}
\rotang^*_{0,i} = \arg{w_{0,i}},~~~ w^*_{0,i} = |w_{0,i}|.
\end{equation}
\end{lemma}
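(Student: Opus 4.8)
The plan is to reduce the maximization over rotation angles to the maximization of a single sinusoid in $\varphi$, whose amplitude and phase turn out to be exactly the modulus and argument of the complex response $w_{0,i}$. First I would exploit the fact that the image $f$ is real-valued: since the correlation defining $w_{0,i}$ is a real-weighted sum of the (complex) shapelet values, taking real and imaginary parts commutes with the correlation. Hence $\Re[f \star B_{0,i}(\cdot,\cdot;\beta)] = f \star \Re[B_{0,i}(\cdot,\cdot;\beta)]$ and likewise for the imaginary part, so that $\Re[w_{0,i}]$ and $\Im[w_{0,i}]$ are themselves correlations of $f$ against the real and imaginary components of the unrotated shapelet.

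Next I would substitute the steerable decomposition of Lemma~\ref{thm:steerable_shapelet}. Applying the (linear) correlation to (\ref{eqn:steerable_shapelet}) with $n=0$, $m=i$ gives
\begin{equation}
\Re[f \star B_{0,i}(\cdot,\cdot;\beta,\varphi)] = \cos(i\varphi)\,\Re[w_{0,i}] + \sin(i\varphi)\,\Im[w_{0,i}].
\end{equation}
This is the computational payoff of steerability: the real response at \emph{any} orientation is a fixed linear combination of the two precomputed quantities $\Re[w_{0,i}]$ and $\Im[w_{0,i}]$, with only the coefficients depending on $\varphi$, so no re-correlation is required per angle.

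The final step is the harmonic-addition (auxiliary-angle) identity. Writing $\Re[w_{0,i}] = |w_{0,i}|\cos\psi$ and $\Im[w_{0,i}] = |w_{0,i}|\sin\psi$ with $\psi = \arg w_{0,i}$, and collapsing the two terms, yields
\begin{equation}
\Re[f \star B_{0,i}(\cdot,\cdot;\beta,\varphi)] = |w_{0,i}|\cos(i\varphi - \psi),
\end{equation}
a pure cosine of amplitude $|w_{0,i}|$. Its maximum over $\varphi$ is therefore $|w_{0,i}|$, attained precisely when the cosine argument vanishes; this establishes $w^*_{0,i} = |w_{0,i}|$ and pins the optimal orientation directly to $\psi = \arg w_{0,i}$, matching (\ref{eqn:steerable_orientation}).

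The one point requiring care — and the main obstacle to a fully rigorous statement — is the factor $m=i$ multiplying $\varphi$. Because the response depends on $i\varphi$ rather than $\varphi$, the cosine is $2\pi/i$-periodic and the maximizer is not unique: there are $i$ equally optimal angles in $[0,2\pi)$, reflecting the $i$-fold rotational symmetry of $B_{0,i}$ noted in the symmetry discussion. I would therefore either interpret $\rotang^*_{0,i}$ modulo this symmetry, or state the phase identity in the scaled form $i\,\rotang^*_{0,i} = \arg w_{0,i}$; under either reading the two equalities in (\ref{eqn:steerable_orientation}) follow immediately, with nothing beyond the elementary trigonometric collapse above.
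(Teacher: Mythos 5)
Your proof is correct, and it is more explicit than the paper's own argument. The paper gives only a calculus sketch: it observes that $\Re[B_{n,m}(x,y;\beta,\varphi)]$ is continuous in $\varphi$, then proposes to differentiate the response, set the derivative to zero, and verify second-order optimality conditions, remarking that $\arg w_{0,i}$ is one of a countably infinite set of optimal angles. Your route dispenses with differentiation entirely: real-valuedness of $f$ and linearity of the correlation, combined with the steerable decomposition of Lemma~1, collapse the rotated response to the single sinusoid $|w_{0,i}|\cos(i\varphi - \arg w_{0,i})$, from which both the maximal value $|w_{0,i}|$ and the \emph{complete} set of maximizers are read off by inspection. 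What this buys over the paper's approach is a closed-form description of the whole optimizer set (rather than a stationarity check at one candidate), and it makes the periodicity structure of the problem transparent; the two arguments are otherwise built on the same foundation, namely the steerability lemma.

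Your caveat about the factor $m=i$ is a genuine catch, not pedantry. Since the maximizers are $\varphi = (\arg w_{0,i} + 2\pi k)/i$, $k \in \mathbb{Z}$, the paper's literal claim $\varphi^*_{0,i} = \arg w_{0,i}$ holds as written only when $i=1$; even the paper's own hedge that $\arg w_{0,i}$ ``is one of'' the solutions would require $(i-1)\arg w_{0,i} \in 2\pi\mathbb{Z}$, which fails generically. Your proposed repairs --- interpreting $\varphi^*_{0,i}$ modulo the $i$-fold symmetry, or restating the phase identity as $i\,\varphi^*_{0,i} = \arg w_{0,i}$ --- are the right fixes. The quantity that actually enters the downstream response-vector analysis, $w^*_{0,i} = |w_{0,i}|$, is unaffected by this subtlety, and your argument establishes it cleanly.
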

\begin{proof}(Sketch.)
Since $\Re[B_{n,m}(x,y;\beta, \rotang)]$ is continuous in $\rotang$, it suffices to take the derivative, equate it to zero, and check second-order optimality conditions to find the optimal rotation and magnitude. Note that $\arg{w_{0,i}}$ is one of a countably infinite set of solutions to the optimal rotation problem.
\end{proof}
Here, $\rotang^*_{0,i}$ is the shapelet orientation at which the real part of the \emph{steered} shapelet response is maximal and $w^*_{0,i}$ is the value of the response at that orientation.
 In the above, the dependence of $\rotang^*_{0,i}$ and $w^*_{0,i}$ on $x$, $y$, and $\beta$ is suppressed in the notation for clarity, but the lemma immediately applies to translated and scaled versions of shapelets as well. As desired, the rotation-optimized response $w^*_{0,i}$ is invariant to rotations of the pattern.

 \subsubsection{Scaling} The scale of a shapelet, given by the parameter $\beta$, also affects shapelet response. In order to ensure that the selected shapelets respond strongly to the pattern of interest and therefore do {\em not} respond strongly to pattern defects, their length scales are tuned to the dominant pattern frequency.  Fixing a pattern, location $(x,y)$, and optimal orientation $\rotang^*$ of a shapelet, the shapelet response is given by the correlation of the shapelet with the image function $f$, \begin{align}
  w^*_{n,m}(\beta) = f \star \Re[B_{n,m}(\cdot,\cdot;\beta,\rotang^*)] \triangleq \sum_{x'}\sum_{y'} f(x',y')\Re[B_{n,m}(x',y';\beta,\rotang^*)],
\end{align}
which is only a function of $\beta$; examples for different $(n,m)$ are shown in Figure~\ref{fig:shapelet_scale}.

\begin{figure}
    \centering
   \begin{subfigure}{0.45\linewidth}
        \centering
        \includegraphics[width=\linewidth]{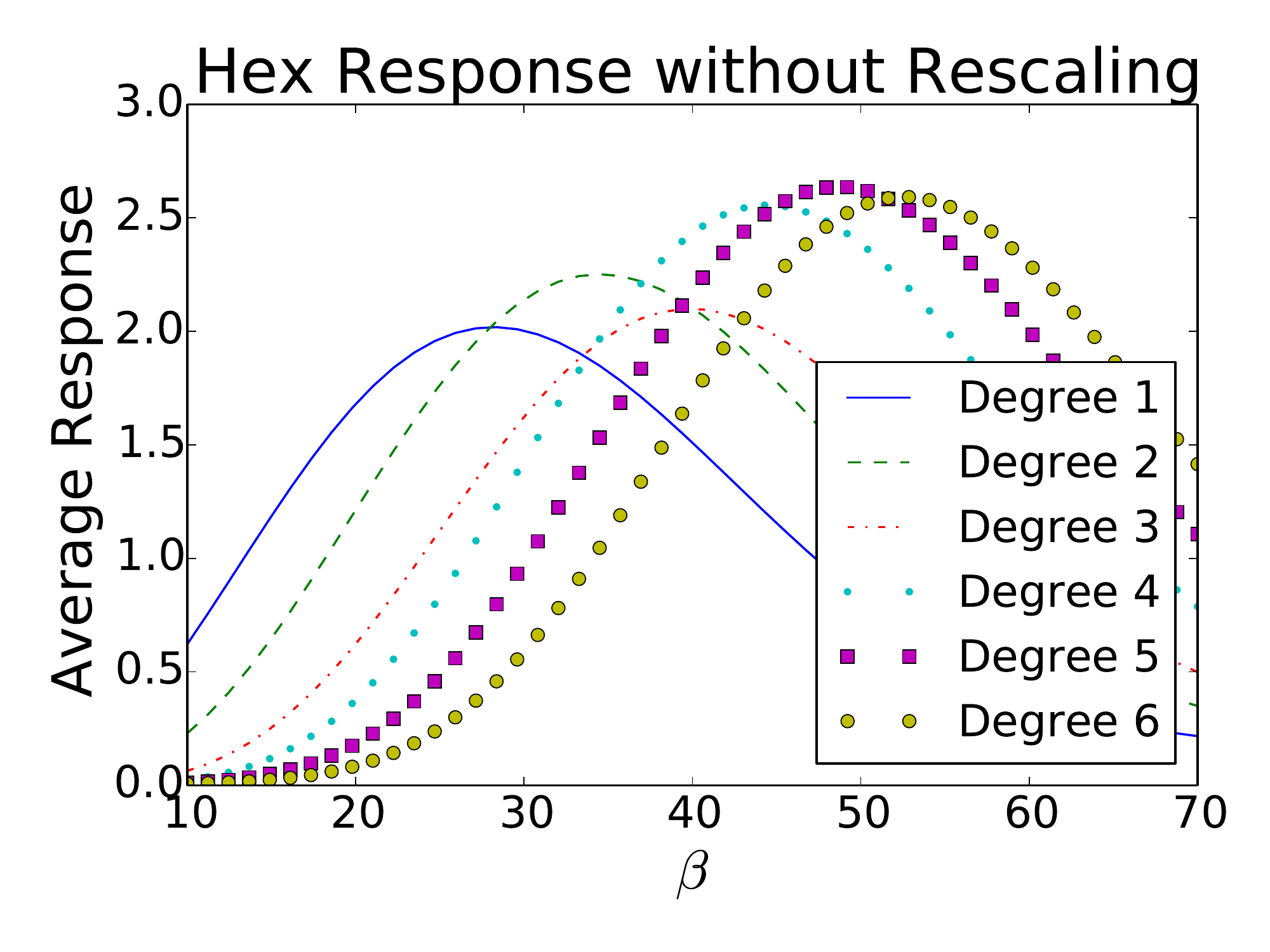}\\

        \subcaption{}\label{fig:unscaled}
    \end{subfigure}    
   \begin{subfigure}{0.45\linewidth}
        \centering
        \includegraphics[width=\linewidth]{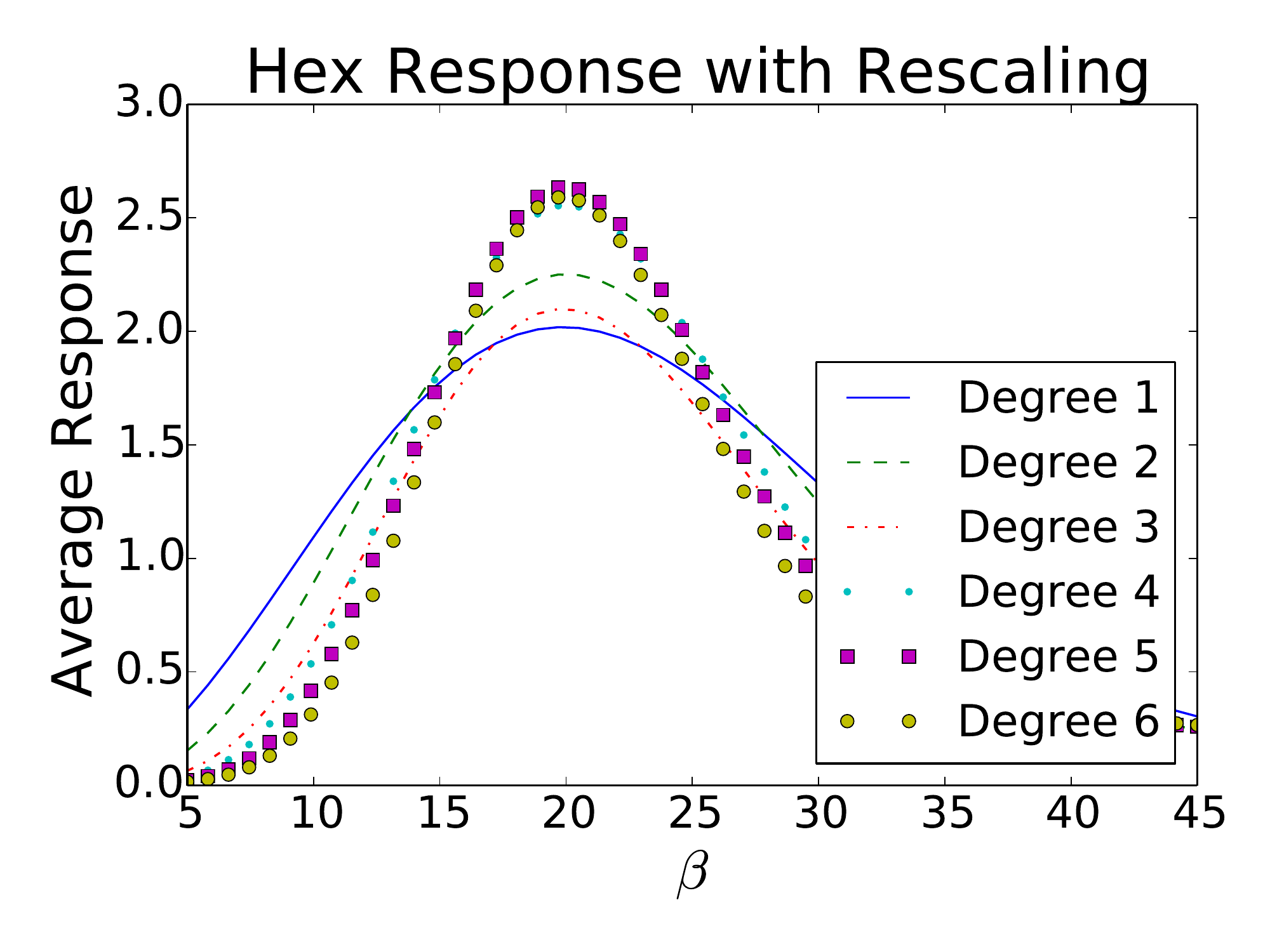}\\

        \subcaption{}\label{fig:rescaled}
    \end{subfigure}    
   \caption{Shapelet filter response versus $\beta$ (a) before and (b) after rescaling of $\beta$ with respect to $\lambda$. Responses shown are correlations with the uniform pattern given by eqns. \ref{eq:uniformpattern}-\ref{eqn:uniform_coeffs}.}
  \label{fig:shapelet_scale}
\end{figure}

In order to ensure that the chosen shapelets respond optimally to the target pattern of interest, $\beta$ is chosen so as to maximize the function $w^*_{C_1}$ (and $w^*_{C_2}$ and so on).
For a given pattern length scale $\lambda$, the maximal shapelet response was found to {\em not} be at $\beta = \lambda$, but rather at $\beta = C\lambda$, with $C$ depending on the order of the shapelet.
Appropriate constants $C$ were found for each shapelet using grid search on $R$; these are given in Table \ref{tab:beta}.
All analyses that follow use this rescaling.

\begin{table}
    \caption{Coefficient values for $\beta = C\lambda$ for shapelets up to order $6$.}\label{tab:beta}
    \centering{}
    
    \begin{tabular}{|c|c|c|c|}
    \hline
    \textbf{(m,n)} & $C$ & \textbf{(m,n)} & $C$\\
    \hline
    (1,0) & 1.418 & (2,0) & 1.725\\
    (3,0) & 2.003 & (4,0) & 2.224\\
    (5,0) & 2.439 & (6,0) & 2.640\\
    \hline        
    \end{tabular}
\end{table}

Figure \ref{fig:steerable_uniform} shows rotation-optimized local pattern response values resulting from application of the scale-optimized steerable shapelets to uniform (i) stripe and (ii) hexagonal patterns using the uniform pattern given by eq.\ \ref{eq:uniformpattern}.
The shapelet responses shown in Figure \ref{fig:stripe_response} and Figure \ref{fig:hex_response} reveal the locations in the pattern that have different rotational symmetries. For example, the $m=1$ shapelet applied to the striped pattern reveals where the pattern has only 1-fold symmetry: this occurs at the boundaries between stripes. At such locations, the pattern is only self-similar when rotated through a full $2\pi$. The $m=2$ shapelet, on the other hand, responds at peaks and troughs in the stripe pattern, where a rotation of $\pi$ results in a self-similar pattern. As the shapelet order is increased beyond $m = 2$, almost no new local pattern information is extracted; the rotation-optimized responses of the shapelets with higher-order symmetry are very similar to the $m = 1$ or $m = 2$ case.

The $m=1$ response applied to the hexagonal pattern responds at locations near the ``edge'' of a pattern mode, which have only 1-fold symmetry. The $m=2$ and $m=4$ shapelets respond strongly to areas that are midway along a line joining two pattern modes; these locations have 2-fold symmetry. The $m=3$ shapelet responds very strongly to the 3-fold symmetry at ``saddle points''---that is, at points in the pattern that are equidistant from three nearby modes, and finally the $m=6$ shapelet responds strongly at pattern modes where there is 6-fold rotational symmetry. Note that the hexagonal pattern lacks any 5-fold symmetry; thus the $m=5$ shapelet, which has only 5-fold and 1-fold symmetry responds only at pattern locations with 1-fold symmetry which are also identified by the $m=1$ shapelet.

The next section explains how the responses of different shapelets can be combined into a useful quantitative analysis of the underlying image.

\begin{figure}
    \centering
    \begin{subfigure}{\linewidth}
        \centering
        \tabcolsep=0.1em
        \begin{tabular}{cccccc}
        $m = 1$ & $m = 2$ & $m = 3$ & $m = 4$ & $m = 5$ & $m = 6$ \\
        \includegraphics[width=0.15\linewidth]{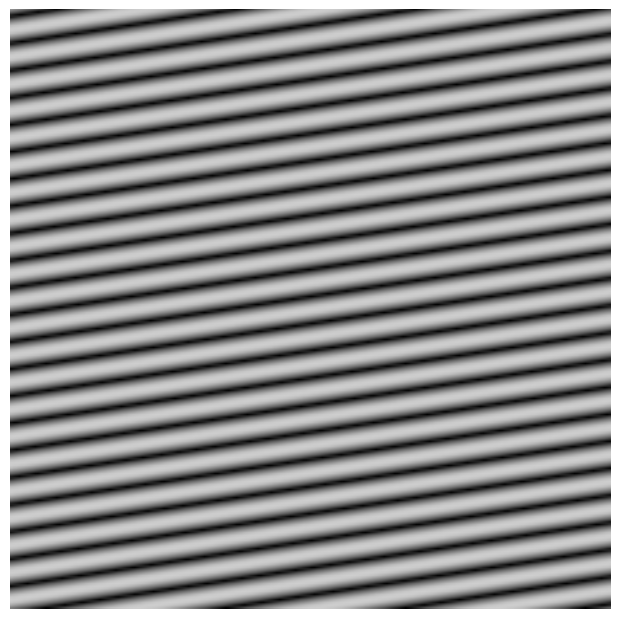} &
        \includegraphics[width=0.15\linewidth]{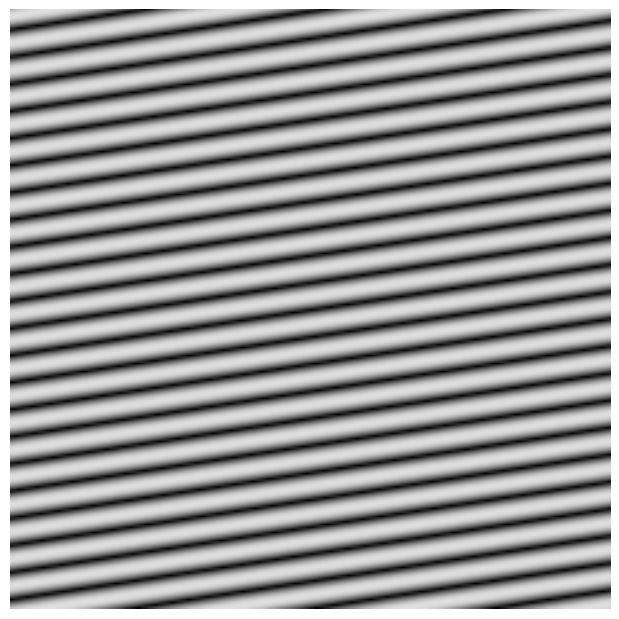} &
        \includegraphics[width=0.15\linewidth]{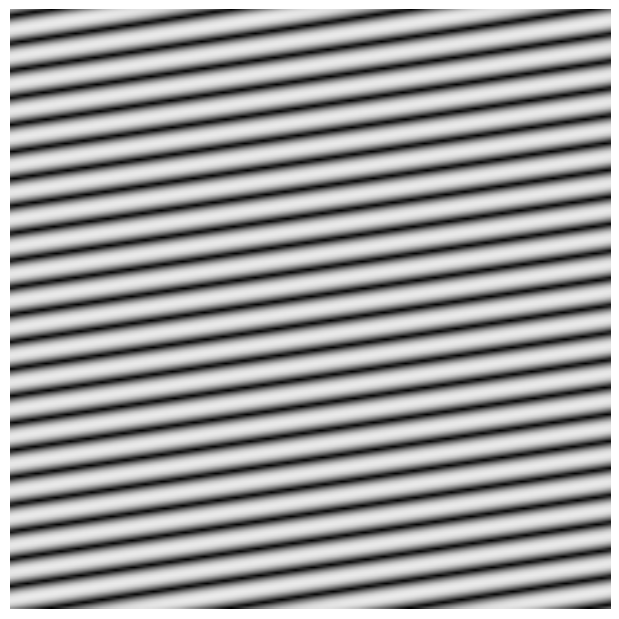} &
        \includegraphics[width=0.15\linewidth]{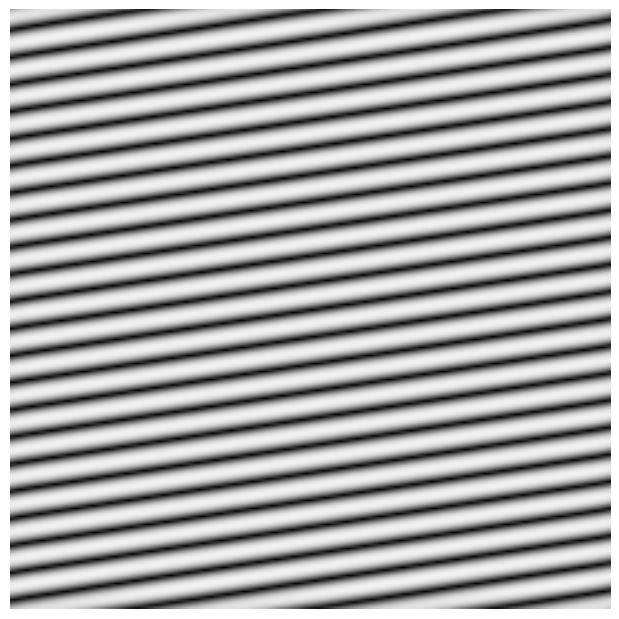} &
        \includegraphics[width=0.15\linewidth]{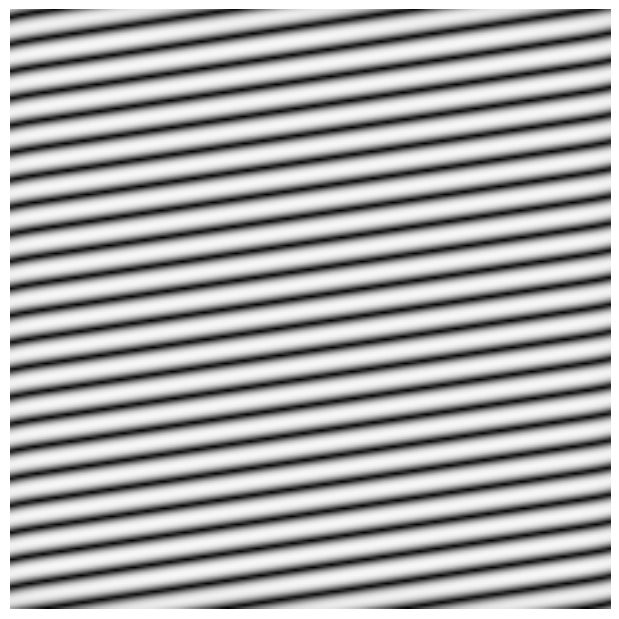} &
        \includegraphics[width=0.15\linewidth]{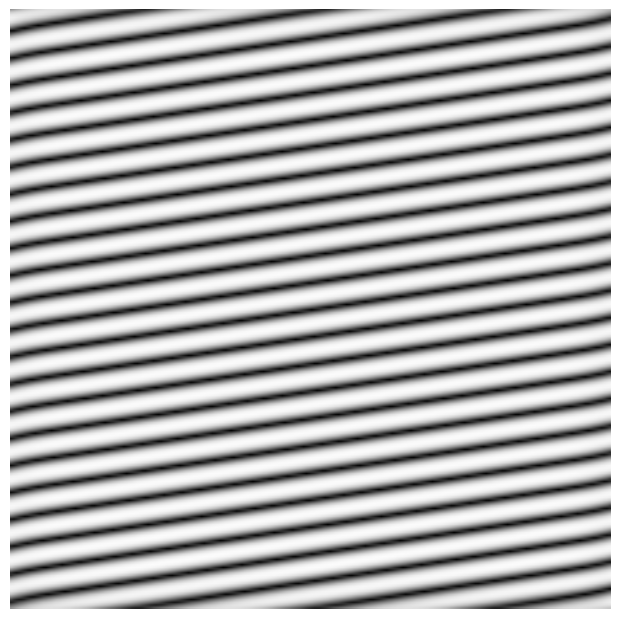}
        \end{tabular}
        \vspace{-1em}
        \subcaption{Responses to striped pattern}\label{fig:stripe_response}
        \vspace{0.5em}
    \end{subfigure}
    
    \begin{subfigure}{\linewidth}
        \centering 
        \tabcolsep=0.1em
        \begin{tabular}{cccccc}
        $m = 1$ & $m = 2$ & $m = 3$ & $m = 4$ & $m = 5$ & $m = 6$ \\
        \includegraphics[width=0.15\linewidth]{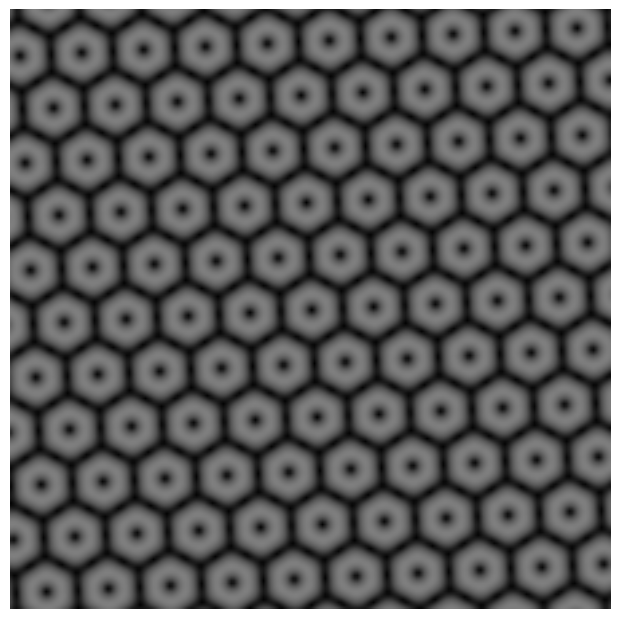} &
        \includegraphics[width=0.15\linewidth]{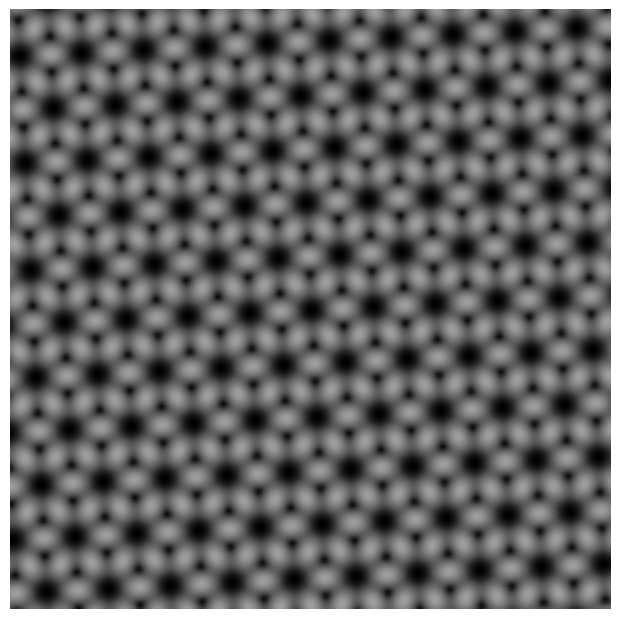} &
        \includegraphics[width=0.15\linewidth]{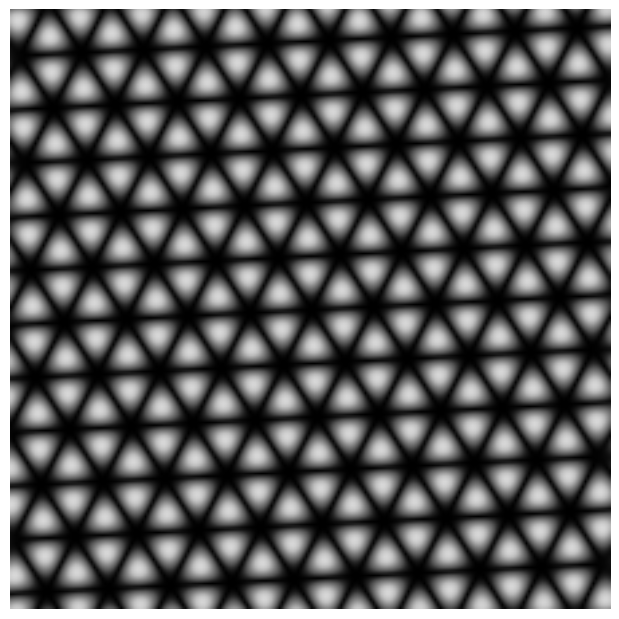} &
        \includegraphics[width=0.15\linewidth]{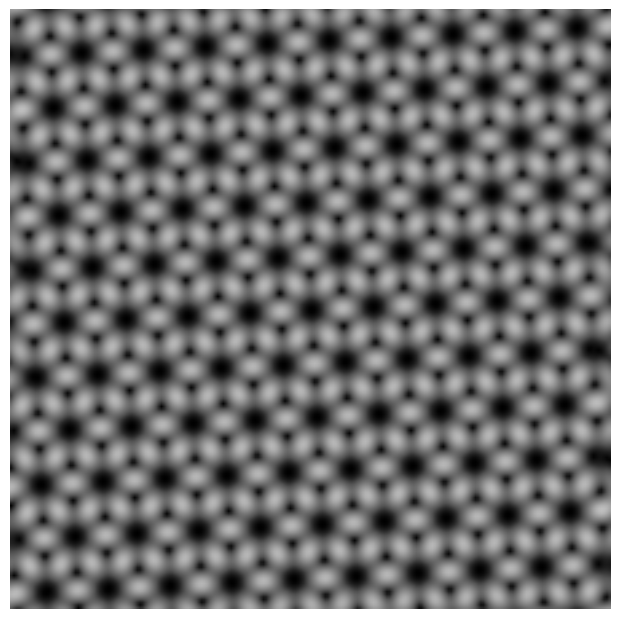} &
        \includegraphics[width=0.15\linewidth]{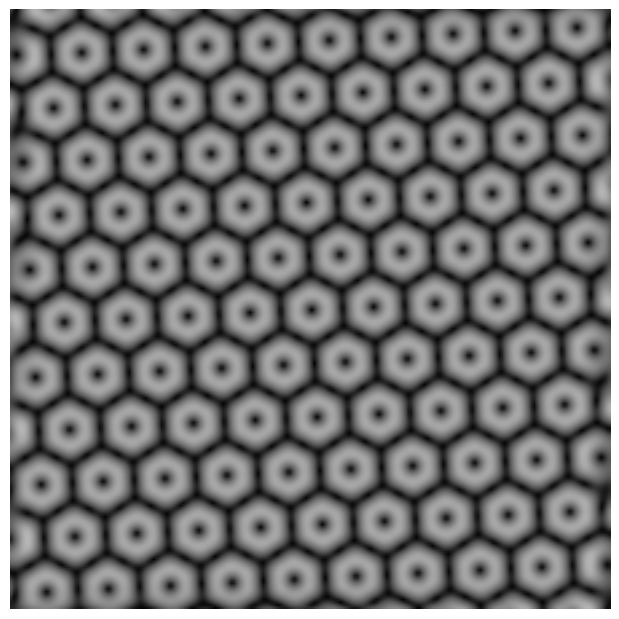} &
        \includegraphics[width=0.15\linewidth]{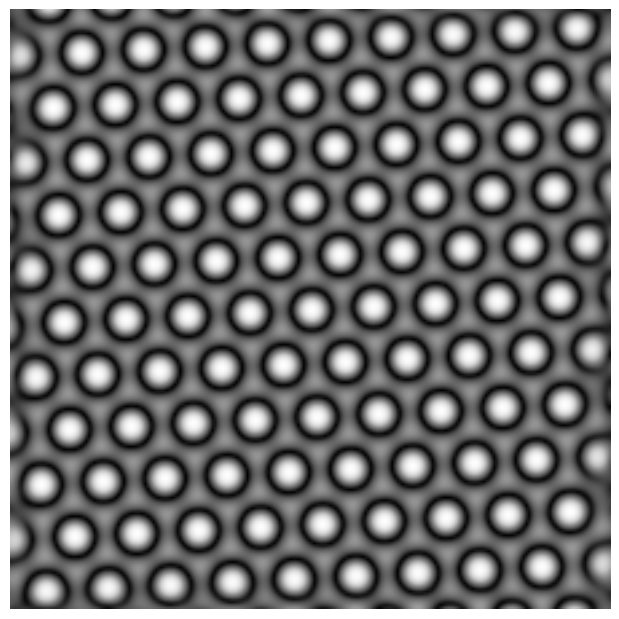}
        \end{tabular}
        \vspace{-1em}
        \subcaption{Responses to hexagonal pattern}\label{fig:hex_response}
        \vspace{0.5em}
    \end{subfigure}
    
   \caption{Responses of steerable polar shapelet filters applied to uniform one-mode approximations of a (a) stripe and (b) hexagonal pattern from eqns. \ref{eq:uniformpattern}-\ref{eqn:uniform_coeffs}.}
  \label{fig:steerable_uniform}
\end{figure}

\subsection{Application to Self-Assembly Imaging}\label{sec:application}

\newcommand{\rv}{\ensuremath{\bm{r}}}
\newcommand{\optwt}{\ensuremath{w^*}}
\newcommand{\nshapes}{\ensuremath{p}}
\newcommand{\refpixels}{\ensuremath{\mathcal{R}}}
\newcommand{\imgpixels}{\ensuremath{\mathcal{I}}}

\begin{figure}
    \centering
    \begin{subfigure}{0.3\linewidth}
        \includegraphics[width=\linewidth]{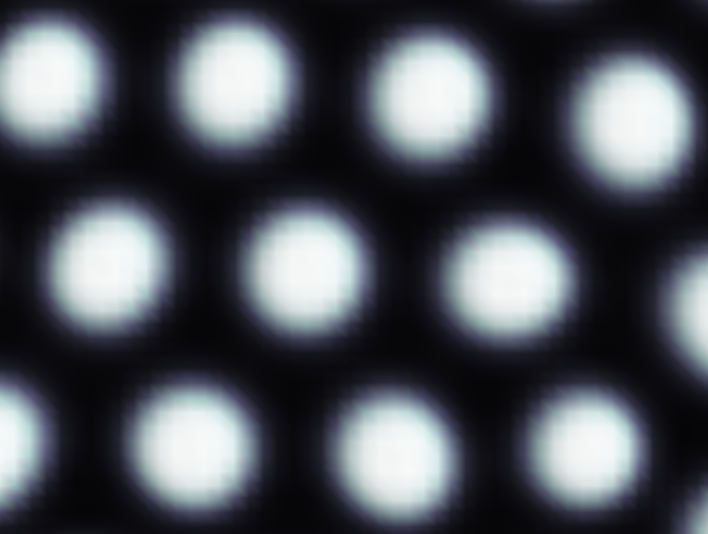}
        \subcaption{}      
    \end{subfigure}
    \begin{subfigure}{0.3\linewidth}
        \includegraphics[width=\linewidth]{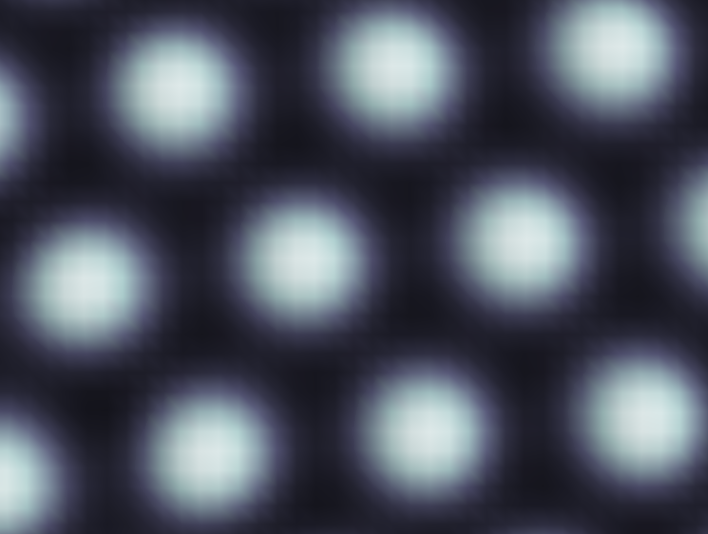}
        \subcaption{}      
    \end{subfigure}
    \begin{subfigure}{0.3\linewidth}
        \includegraphics[width=\linewidth]{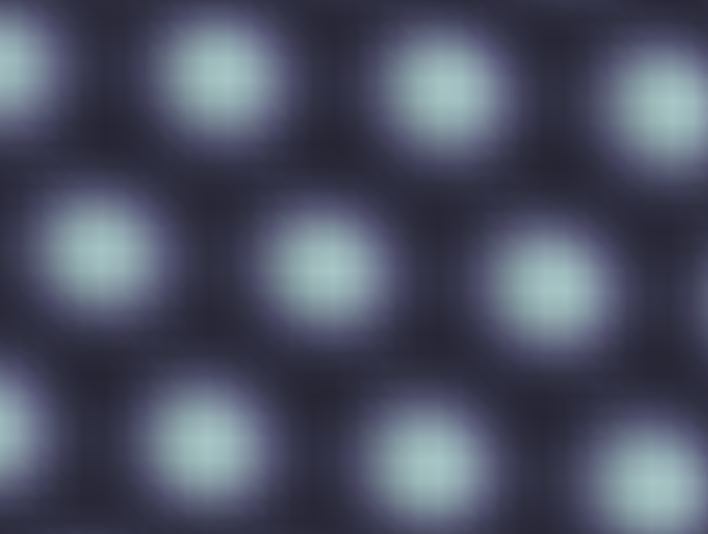}
        \subcaption{}      
    \end{subfigure}
    \caption{Examples of hexagonal surface self-assembly with features of varying character: (a) sharp interface, (b) semi-diffuse interface, (c) diffuse interface. Figures taken from ref. \cite{Abukhdeir2011}}\label{fig:feature_variation}
\end{figure}

Surface self-assembly imaging typically involves surfaces with patterns that are non-uniform and pattern features that are not well-approximated using a one-mode assumption.
Figures \ref{fig:feature_variation} and \ref{fig:test_case} shows example images of two-dimensional surface self-assembly where non-uniform stripe and hexagonal patterns are present (taken from \cite{Abukhdeir2011}).
In Figure \ref{fig:feature_variation} the pattern features themselves vary in shape as is shown in Figure \ref{fig:feature_variation}.
In Figures~\ref{fig:test_stripe}-\ref{fig:test_hex} multiple quasi-uniform subdomains, or ``grains'', are present with defect regions (grain boundaries) at the interface between them.
In order to test the presented shapelet-based method on these realistic patterns, a guided machine learning approach was used to classify regions with uniform patterns from those with defects.

\begin{figure*}[h]
    \begin{subfigure}{0.45\linewidth}
        \begin{tikzpicture}
	        \node (img1){\includegraphics[width=\linewidth]{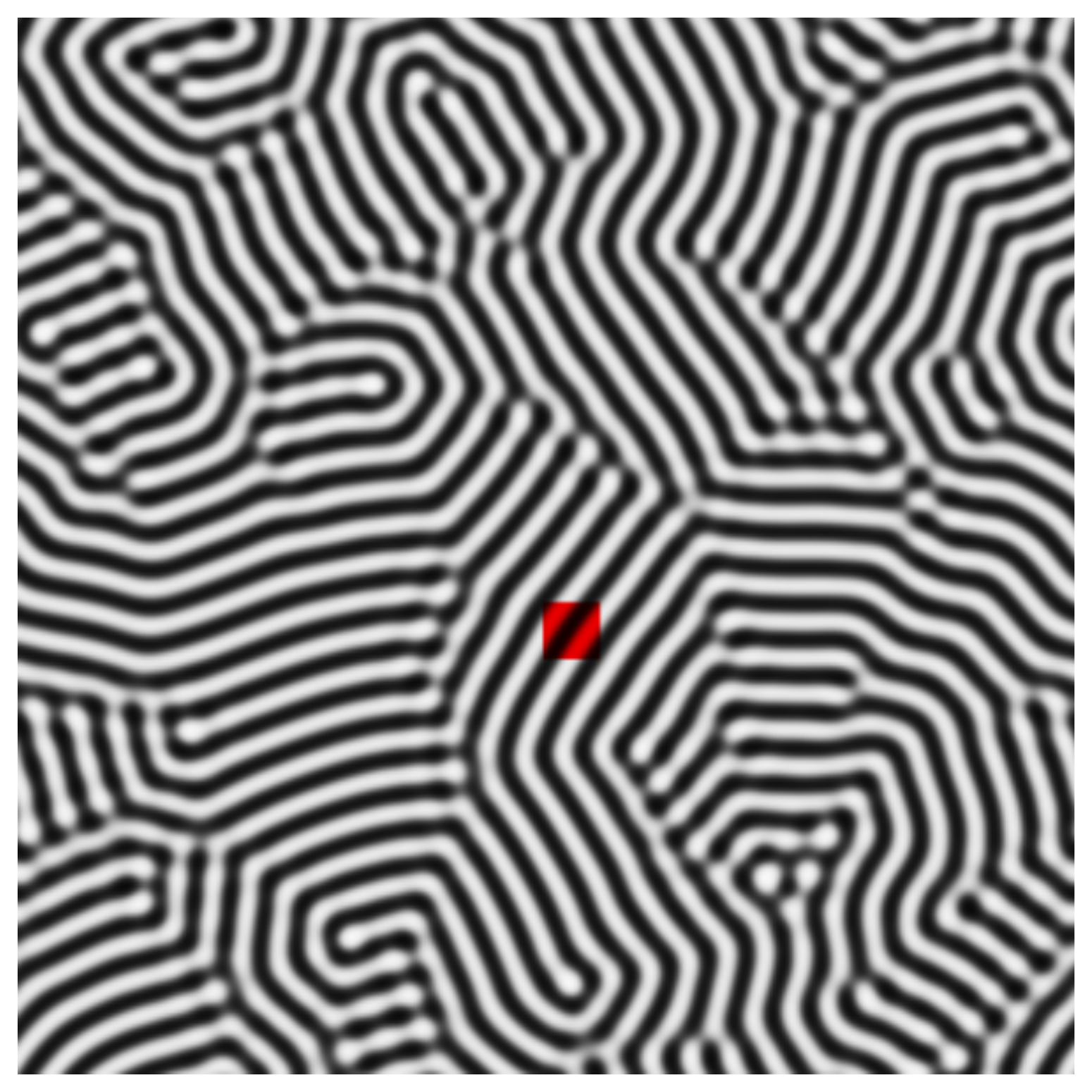}};
	        \node (img2) at (img1.center)[xshift=-0.323\linewidth, yshift=-0.323\linewidth]{\includegraphics[trim = 2in 2in 2in 2in, clip, width=0.33\linewidth]{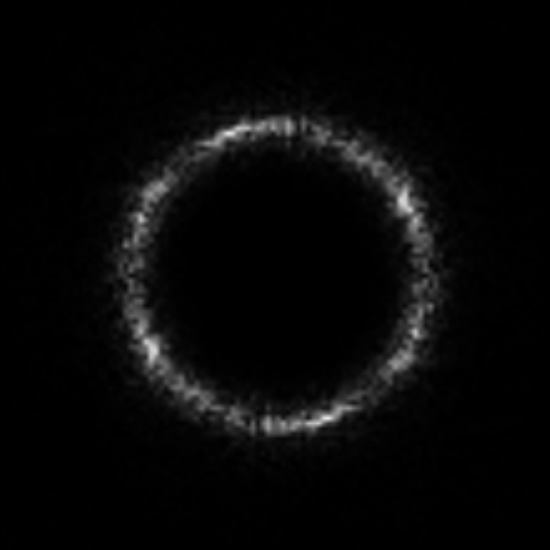}};
        \end{tikzpicture}
        \subcaption{}
        \label{fig:test_stripe}
    \end{subfigure}    
    \hfill  
    \begin{subfigure}{0.45\linewidth}
        \begin{tikzpicture}
	        \node (img1){\includegraphics[width=\linewidth]{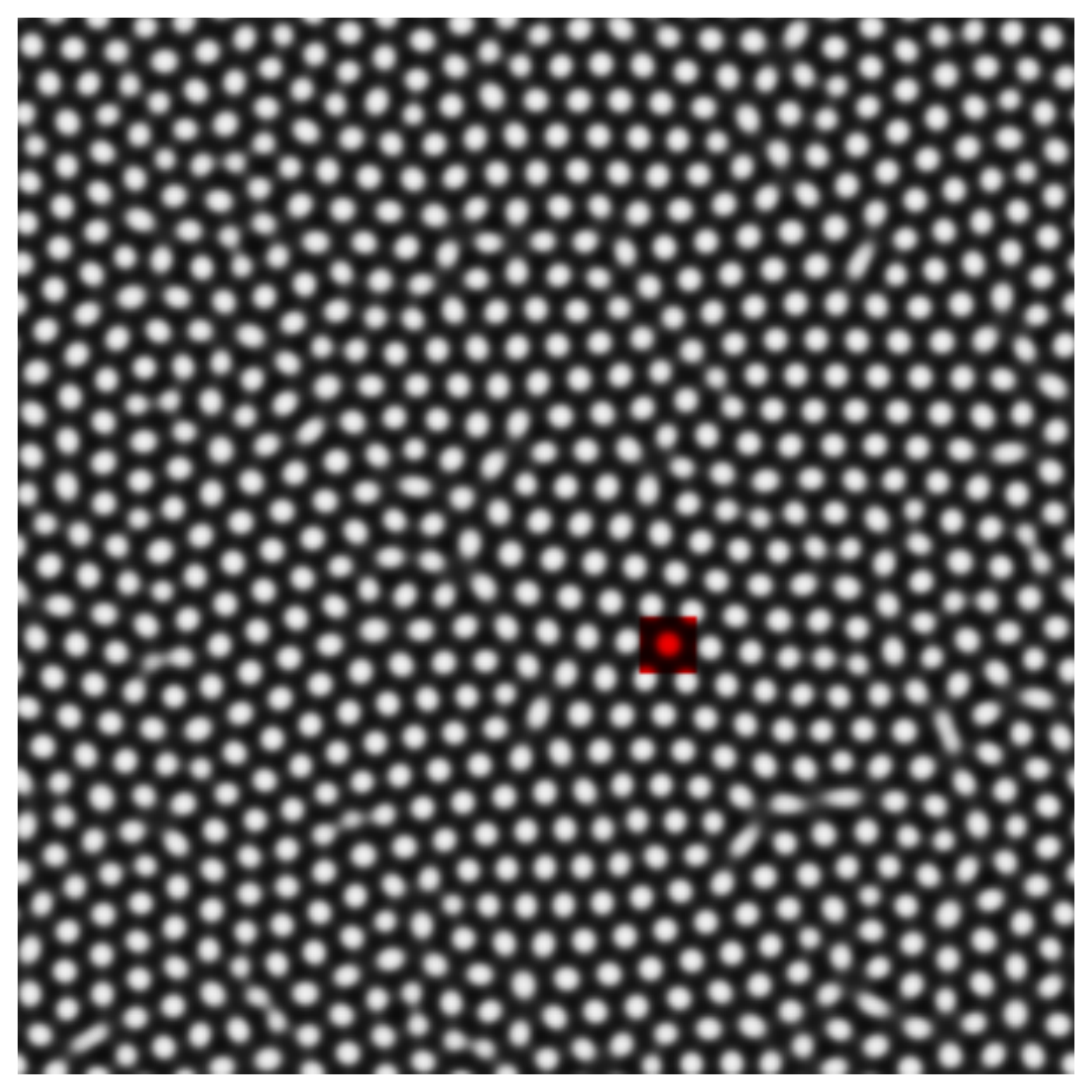}};
	        \node (img2) at (img1.center)[xshift=-0.323\linewidth, yshift=-0.323\linewidth]{\includegraphics[trim = 2in 2in 2in 2in, clip, width=0.33\linewidth]{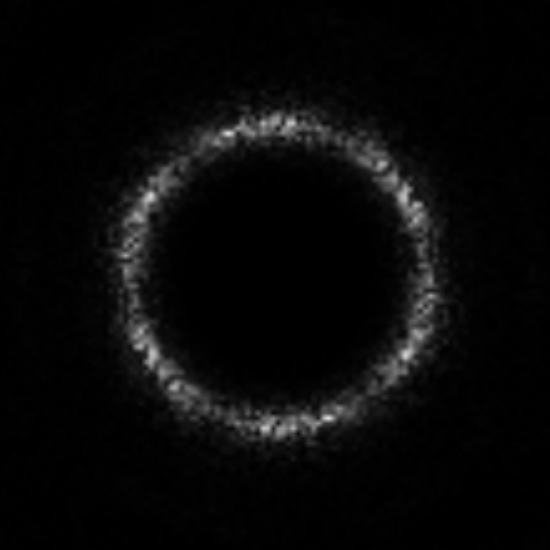}};
        \end{tikzpicture}
        \subcaption{}
        \label{fig:test_hex}
    \end{subfigure}\\      
    \begin{subfigure}{0.45\linewidth}
        \frame{\includegraphics[width=\linewidth]{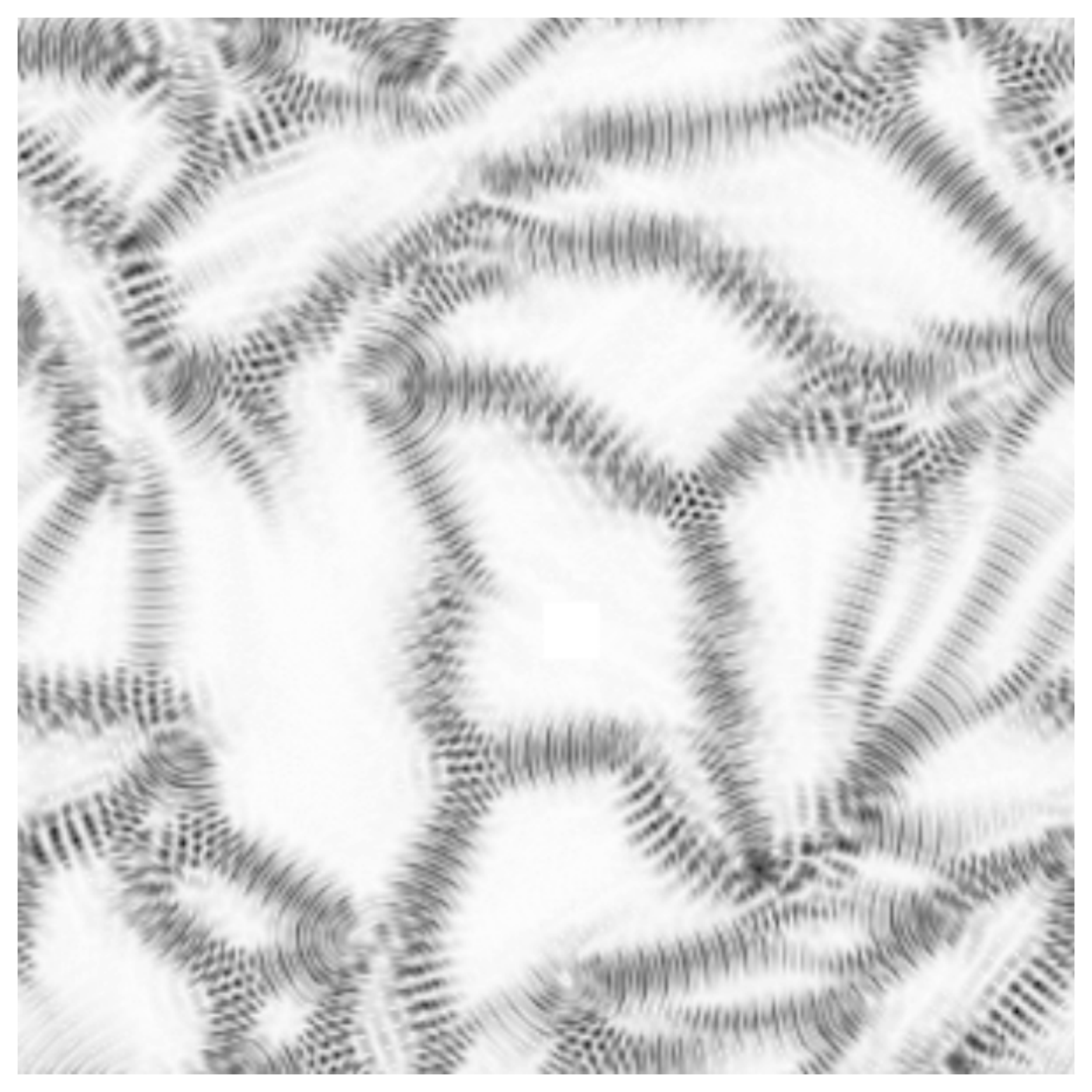}}
        \subcaption{}
        \label{fig:neighbor_stripe}
    \end{subfigure}
    \hfill
    \begin{subfigure}{0.45\linewidth}
        \frame{\includegraphics[width=\linewidth]{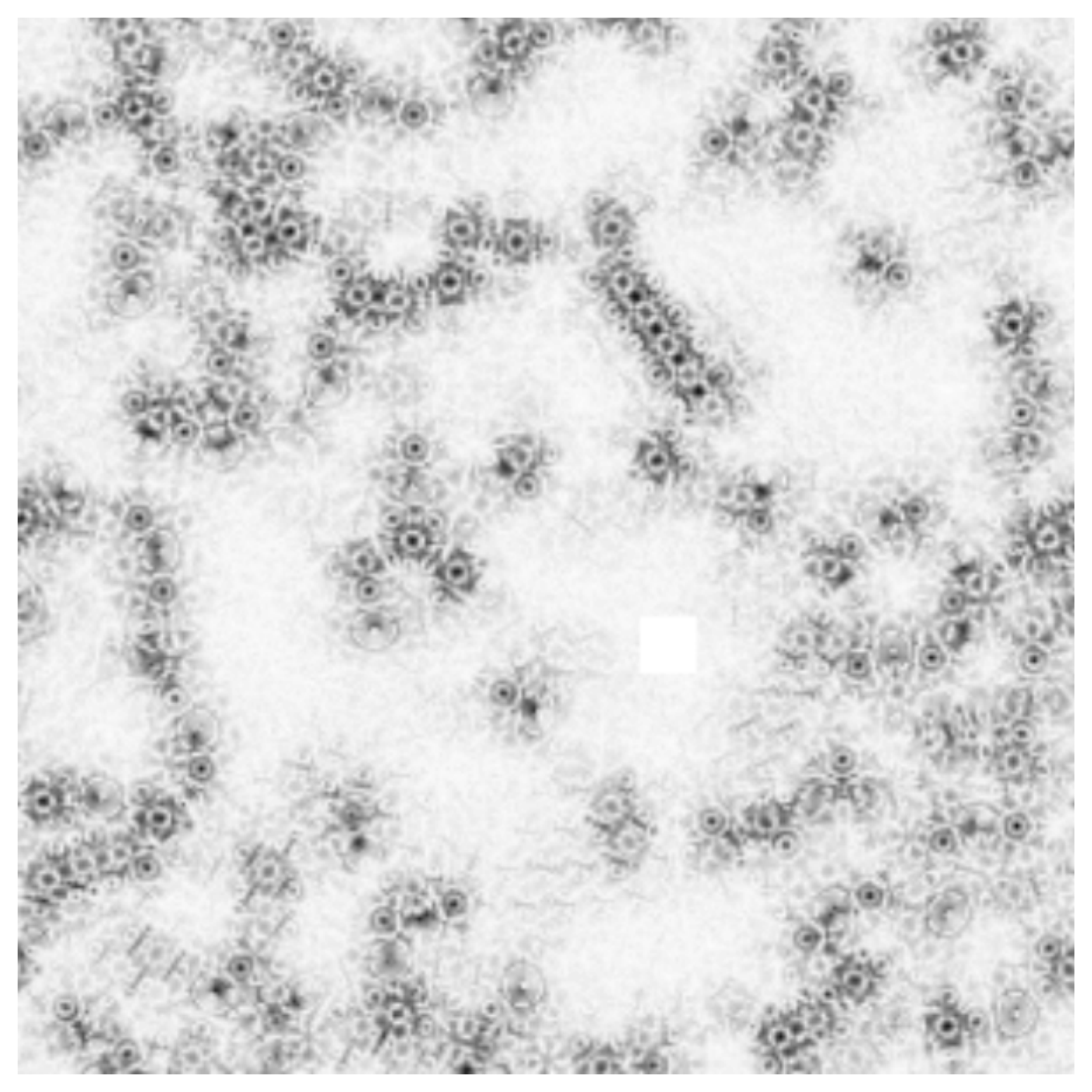}}
        \subcaption{}
        \label{fig:neighbor_hex}
    \end{subfigure}
    \caption{Examples of non-uniform (a) stripe and (b) hexagonal patterns from simulations of surface self-assembly (taken from ref. \cite{Abukhdeir2011}) with inset spectral density plots; Results from applying the guided machine learning algorithm to the (c) stripe and (d) hexagonal patterns where the response distance (eqn. \ref{eqn:response_distance}) was normalized to range from $0$ (black) to $1$ (white) and the user-specified set of pixels are highlighted in (a-b).}\label{fig:test_case}
\end{figure*}

The \emph{response space} is defined as $\rv\in\mathbb{R}^p$ where $p$ is the number of steerable shapelet filters used to quantify the pattern.
Thus at each point in the image $(x,y)$, a response vector $\rv$ is computed,
\begin{equation}
    \rv(x,y) = \frac{[\optwt_{0,1}(x,y), \optwt_{0,2}(x,y),...,\optwt_{0,\nshapes}(x,y)]^\T}{||[\optwt_{0,1}(x,y), \optwt_{0,2}(x,y),...,\optwt_{0,\nshapes}(x,y)]||_2} 
\end{equation}
consisting of the shapelet responses $\optwt_{0,j}(x,y)$ under the optimal orientation for image location $(x,y)$ from eqn.~\ref{eqn:steerable_orientation}.
Given a user-specified set of coordinate pairs (i.e.\ pixel locations) \refpixels{} of a defect-free subdomain of the image, at any location of interest $(x',y')$ in the image the {\em response distance} may be defined from the pixel $(x',y')$ to the reference set.
\begin{equation}\label{eqn:response_distance}
d_{\rv}((x',y'),\refpixels) = \min_{(x,y) \in \refpixels} ||\rv(x',y') - \rv(x,y)||_2 ,
\end{equation}
where $d_{\rv}((x',y'),\refpixels)$ is the Euclidean distance between the response vector at the location of interest and the closest response vector in the reference set. 

The response distance encapsulates how different the image is at location $(x',y')$ from the reference set in terms of the relative shapelet responses.
It serves to highlight areas in the image where defects are present or where no pattern is present.
Such areas have response vectors that have larger $d_{\rv}((x',y'),\refpixels)$ from those where no defects are present.

This application of the steerable shapelets method was applied to the stripe and hexagonal self-assembled domain images shown in Figure \ref{fig:test_case}.
The characteristic pattern wavelength $\lambda$ found through the maximum peak of the spectral density (inset of Figure \ref{fig:test_case}) was used to determine the appropriate shapelet scale factors as described in Section \ref{sec:shapeletselection}.
Figures~\ref{fig:neighbor_stripe}-\ref{fig:neighbor_hex} illustrate the normalized Euclidean distance of the response vectors at each pixel with respect to the response vector of the uniform domain shown in Figure \ref{fig:test_case}.
In this figure, intensity is inversely proportional to $d_{\rv}(\cdot,\refpixels)$ for the given quasi-uniform reference set, which clearly reveals the locations of defects in the image.
Note that response distance is \emph{invariant} to pattern rotations, because the elements of the response vectors are invariant to pattern rotations.

Pattern defects are of two general types: translational and orientational.
These are referred to as dislocation and disclination defects \cite{Brock1992}, respectively, as shown in Figure \ref{fig:test_case}.
In stripe patterns, dislocations correspond to regions where a stripe feature begins ($+$) or ends ($-$).
In hexagonal patterns, dislocations correspond to the beginning ($+$) or end ($-$) of a row of hexagonal features.
Orientational defects, disclinations, are manifested in a rapid transition from one pattern orientation to another.
In stripe patterns, the majority of disclinations involve a $\frac{\pi}{2}$ rotation and, in hexagonal patterns, they involve a $\frac{\pi}{6}$ rotation.

The steerable shapelets method results, shown in Figure \ref{fig:test_case}, show a direct relation between areas of strong response and quasi-uniform areas in the original pattern images.
Areas where response is minimal corresponds to one of three localized cases: (i) the presence of defects, (ii) large strain of the pattern (stripe curvature and or dilation/compression), (iii) deviation of the pattern feature from the one-mode approximation.

With respect to defects present in both images, the image analysis results show good agreement with visual inspection of local topology in the original image.
In areas of large strain of the pattern, which are typically also in the locality of defects, the shapelet response decays smoothly.
This could be considered a drawback in that the method does not strongly distinguish between defect ``core'' regions and the region of strain surrounding the core.
Alternatively, resolving the entirety of the region influenced by a single defect, or cluster of defects, likely has some significant in relating the pattern quality to material properties.
Finally, in both images there are pattern features that strongly deviate from the one-mode approximation of the pattern.
In the stripe pattern there are regions with convex circular shape and in the hexagonal pattern there are regions with lamellar-like features.
The method is both robust in the presence of these features and strongly responds to their presence.

\section{Conclusions}

A method for quantitative analysis of surface self-assembly imaging was presented and applied to images of stripe and hexagonal ordered domains.
A set of orthogonal functions, shapelets, were shown to be useful as filters which respond optimally to surface patterns with n-fold symmetry $n$ is the order of the shapelet.
Steerable formulations of the shapelet functions were derived using steerable filter theory and used to efficiently compute the filter rotation which yields maximal response.
The utility of the steerable shapelet filter approach was demonstrated on uniform stripe and hexagonal patterns.
Furthermore, realistic nonuniform surface patterns were analyzed using the presented steerable shapelet method through guided machine learning.
This approach is able to \emph{quantitatively} distinguish between uniform (defect-free) and non-uniform (strained, defects) regions within the imaged self-assembled domains.
The presented method is both computationally efficient, requiring only two shapelet evaluations per steerable shapelet, and robust in the presence of variation in pattern feature shape.
Finally, the shapelet-based method provides significantly enhanced resolution (pixel-level) compared to the bond-orientational order method (feature-level).

\section*{Acknowledgements}

This work was made possible by the Natural Sciences and Engineering Research Council of Canada (\href{http://www.nserc-crsng.gc.ca/}{NSERC}) and the facilities of the Shared Hierarchical Academic Research Computing Network (\href{www.sharcnet.ca}{SHARCNET}).

\section*{References}

\bibliography{computational,self_assembly,general}

\end{document}